\newtheorem{theorem}{Theorem}
\newtheorem{lemma}[theorem]{Lemma}
\theoremstyle{definition}
\newtheorem{definition}{Definition}
\newtheorem{assumption}{Assumption}
\newtheorem{example}{Example}
\theoremstyle{definition}
\newtheorem{remark}{Remark}
\title{Stability-based Generalization Analysis of Randomized Coordinate Descent for Pairwise Learning \thanks{To appear in AAAI 2025. LW acknowledges support by the National Natural Science Foundation of China (72431008, 61903309) and the Sichuan Science and Technology Program (2023NSFSC1355). YL acknowledges support by the Research Grants Council of Hong Kong [Project No. 22303723].
                    }}
\author{
    Liang Wu\textsuperscript{\rm 1, 2},
    Ruixi Hu\textsuperscript{\rm 1}\thanks{The
    corresponding author is Ruixi Hu.},
    Yunwen Lei\textsuperscript{\rm 3}\\
}
\begin{document}

\maketitle
\thispagestyle{plain}

\begin{abstract}
Pairwise learning includes various machine learning tasks, with ranking and metric learning serving as the primary representatives. While randomized coordinate descent (RCD) is popular in various learning problems, there is much less theoretical analysis on the generalization behavior of models trained by RCD, especially under the pairwise learning framework. In this paper, we consider the generalization of RCD for pairwise learning. We measure the on-average argument stability for both convex and strongly convex objective functions, based on which we develop generalization bounds in expectation. The early-stopping strategy is adopted to quantify the balance between estimation and optimization. Our analysis further incorporates the low-noise setting into the excess risk bound to achieve the optimistic bound as $O(1/n)$, where $n$ is the sample size.
\end{abstract}

%

\section{Introduction}\label{se1}

\begin{table*}[!htbp]
\captionsetup{labelformat=empty}
\captionsetup{aboveskip=7pt}
\centering
\begin{tabular}{|c|c|c|c|c|c|c|}
\hline
Learning Paradigm & Algorithm & Reference &  Assumption & Noise Seeting & Iteration &  Rate \\
\hline
\multirow{4}{*}{Pointwise Learning} & \multirow{9}{*}{SGD} & \multirow{4}{*}{Lei (2020)} & \multirow{2}{*}{C, $L$-S} & \diagbox{}{} & \multirow{6}{*}{$T \asymp n$} & $O(1/\sqrt{n})$ \\
\cline{5-5}
\cline{7-7}
 &  &  &  & $F(\mathbf{w}^*) = 0$ &  & $O(1/n)$ \\
\cline{4-4}
\cline{5-5}
\cline{7-7}
 &  &  & C, $G$-Lip, $L$-S & \multirow{3}{*}{\diagbox{}{}} &  & $O(1/\sqrt{n})$ \\
\cline{4-4}
\cline{7-7}
 &  &  & SC, $G$-Lip, $L$-S &  &  & $O(1/n\sigma)$ \\
\cline{1-1}
\cline{3-3}
\cline{4-4}
\cline{7-7}
\multirow{8}{*}{Pairwise Learning} &  & \multirow{5}{*}{Lei (2021)} & \multirow{2}{*}{C, $L$-S} &  &  & $O(1/\sqrt{n})$ \\
\cline{5-5}
\cline{7-7}
 &  &  &  & $F(\mathbf{w}^*) = O(1/n)$ &  & $O(1/n)$ \\
\cline{4-4}
\cline{5-5}
\cline{6-6}
\cline{7-7}
 &  &  & C, $G$-Lip & \multirow{4}{*}{\diagbox{}{}} & $T \asymp n^2$ & $O(1/\sqrt{n})$ \\
\cline{4-4}
\cline{6-6}
\cline{7-7}
 &  &  & SC, $L$-S &  & $T \asymp n$ & $O(1/n\sigma)$ \\
\cline{4-4}
\cline{6-6}
\cline{7-7}
 &  &  & SC, $G$-Lip &  & $T \asymp n^2$ & $O(1/n\sigma)$ \\
\cline{2-2}
\cline{3-3}
\cline{4-4}
\cline{6-6}
\cline{7-7}
 & \multirow{3}{*}{RCD} & \multirow{3}{*}{This Work} & \multirow{2}{*}{C, $L$-S, Lip-grad} &  & $T \asymp \sqrt{n}$ & $O(1/\sqrt{n})$ \\
\cline{5-5}
\cline{6-6}
\cline{7-7}
 &  &  &  & $F(\mathbf{w}^*) = O(1/n)$ & $T \asymp n$ & $O(1/n)$ \\
\cline{4-4}
\cline{5-5}
\cline{6-6}
\cline{7-7}
 &  &  & SC, $L$-S, Lip-grad & \diagbox{}{} & $T \asymp \log (n)$ & $O(\sqrt{\log(n)}/n)$ \\
\hline
\end{tabular}
\caption{Table 1: All the above convergence rates are based on excess risk bounds in expectation. C meas the convexity as Assumption \ref{as4} and SC means the strong convexity as Assumption \ref{as5}. $G$-Lip refers to Assumption \ref{as1}, $L$-S refers to Assumption \ref{as2} and Lip-grad refers to Assumption \ref{as3}. Furthermore, Lei (2020) refers to \citeauthor{l:2020y} \shortcite{l:2020y}, Lei (2021) refers to \citeauthor{l:2021} \shortcite{l:2021}.} \label{tab1}
\end{table*}

The paradigm of pairwise learning has found wide applications in machine learning.
Several popular examples are shown as the following.
In ranking, we aim to find a model that can predict the ordering of instances \cite{c:2008x,r:2012}.
In metric learning, we wish to build a model to measure the distance between instances \cite{c:2016,y:2019,d:2020}.
Besides, various problems such as AUC maximization \cite{c:2003,g:2013,y:2016x,l:2018x} and learning tasks with minimum error entropy \cite{h:2015} can also be formulated as this paradigm.
For all these pairwise learning tasks, the performance of models needs to be measured on pairs of instances.
In contrast to pointwise learning, this paradigm is characterized by pairwise loss functions $f : \mathcal{H} \times \mathcal{Z} \times \mathcal{Z} \mapsto \mathbb{R}$, where $\mathcal{H}$ and $\mathcal{Z}$ denote the hypothesis space and the sample space respectively.
To understand and apply the paradigm better, there is a growing interest in the study under the uniform framework of pairwise learning.

Randomized coordinate descent (RCD) is one of the most commonly used first-order methods in optimization.
In each iteration, RCD updates a randomly chosen coordinate along the negative direction of the gradient and keeps other coordinates unchanged.
This makes RCD especially effective for large-scale problems \cite{n:2012}, where the computational cost is rather hard to handle.

The extensive applications of RCD have motivated some interesting theoretical analysis on its empirical behavior \cite{n:2012,r:2014,b:2021,c:2023}, which focuses on iteration complexities and empirical risks in the optimization process.
However, there is much less work considering the generalization performance of RCD, i.e., how models trained by RCD would behave on testing samples.
It is notable that the relative analysis only considers the case of pointwise learning \cite{p:2021}, which is different from pairwise learning in the structure of loss functions.
Besides, this work fails to establish the generalization bound based on the $\ell_2$ on-average argument stability in a strongly convex case.
Therefore, the existing theoretical analysis of RCD is not enough to describe the
discrepancy between training and testing for pairwise learning.
How to quantify the balance between statistics and optimization under this setting still remains a challenge.
In this paper, we develop a more systematical and fine-grained generalization analysis of RCD for pairwise learning to refine the above study.
Our analysis can lead to a more appropriate design for the optimization algorithm and the machine learning model.

In this paper, we present the generalization analysis based on the concept of algorithmic stability \cite{b:2002}.
The comparison between the existing work and this paper is presented in Table \ref{tab1}.
Our contributions are summarized as follows.

1.  Under general assumptions on $L$-smoothness of loss functions, coordinate-wise smoothness and convexity of objective functions, we study the $\ell_2$ on-average argument stability and the corresponding generalization bounds of RCD for pairwise learning.
To achieve optimal performance, we consider the balance between the generalization error and the optimization error.
The result shows that the early stopping strategy is beneficial to the generalization.
The excess risk bounds enjoy the order of $O(1 / \sqrt{n})$ and $O( \sqrt{\log(n)} / n)$ for convex and strongly-convex objective functions respectively, where $n$ denotes the sample size and $\sigma$ is the strong-convexity parameter.

2. We use the low noise condition $F(\mathbf{w}^*) = O(1/n)$ to develop shaper generalization bounds under the convex case.
This motivates the excess risk bound $O(1 / n)$, which matches the approximate optimal rate under the strongly convex case.
However, we should note that the approximate optimal rate is accessible with a faster computing $T \asymp \log(n)$ for strongly convex empirical risks.

The main work is organized according to the convexity of the empirical risk.
We consider the on-average argument stability and develop the corresponding excess risk bounds.
The early-stopping strategy is useful for balancing optimization and estimation, by which we present the optimal convergence rate.
Furthermore, there are two key points about our proof in comparison with the pointwise case \cite{p:2021}:
One is applying the coercivity property to bound the expansiveness of RCD updates since the expectation of randomized coordinates leads to the gradient descent operator.
The other is following the optimization error bounds for pointwise learning directly since they both use unbiased gradient estimations.

\section{Related Work}\label{se2}

In this section, we review the related work on RCD and generalization analysis for pairwise learning.

\textbf{Randomized Coordinate Descent (RCD).}\label{se21}
The real-world performance of RCD has demonstrated its significant efficiency in many large-scale optimization tasks, including regularized risk minimization \cite{c:2008,s:2009}, low-rank matrix completion and learning \cite{h:2019,c:2024},
and optimal transport problems \cite{x:2024}.
The convergence analysis of RCD and its accelerated variant was first proposed by \citeauthor{n:2012} \shortcite{n:2012}, where global estimates of the convergence rate were considered.
Then the strategies to accelerate RCD were further explored \cite{r:2014}, for which the corresponding convergence properties were established for structural optimization problems \cite{z:2014,l:2015,l:2017}.
RCD was also studied under various settings including nonconvex optimization \cite{b:2021,c:2023}, volume sampling \cite{r:2020} and differential privacy \cite{s:2021}.
The above study mainly considered the empirical behavior of RCD.
However, the aim of this paper is to quantify the generalization performance of machine learning models trained by RCD.

\textbf{Generalization for Pairwise Learning.}\label{se22}
The generalization ability shows how models based on training datasets will adapt to testing datasets.
It serves as an important indicator for the enhancement of models and algorithms in the view of statistical learning theory (SLT).
To investigate the generalization performance for pairwise learning, methods of uniform convergence analysis and stability analysis have been applied under this wide learning framework.
More details are described below.

The uniform convergence approach considers the connection between generalization errors and U-statistics, from which generalization bounds via corresponding U-processes are developed sufficiently.
Complexity measures including VC dimension \cite{v:1994}, covering numbers \cite{z:2002} and Rademacher complexities \cite{b:2001} for the hypothesis space play a key role in this approach.
For pairwise learning, these measures have been used for studying the generalization of specific tasks such as ranking \cite{c:2008x,r:2012} and metric learning \cite{c:2016,y:2019,d:2020}.
Recently, some work also explored the generalization of deep networks with these tasks \cite{h:2023,z:2024}.
Furthermore, generalizations for the pairwise learning framework were studied under various settings, including PL condition \cite{l:2021}, regularized risk minimization \cite{l:2020} and online learning \cite{w:2012,k:2013}.
As compared to the stability analysis, the complexity analysis enjoys the ability of yielding generalization bounds for non-convex objective functions \cite{m:2018,d:2022}.
However, generalization bounds yielded by the uniform convergence approach are inevitably associated with input dimensions \cite{a:2009,f:2016,schliserman2024dimension}, which can be avoided in the stability analysis.

Algorithmic stability serves as an important concept in SLT, which is closely related to learnability and consistency \cite{f:2016,r:2005}.
The basic framework for stability analysis was proposed by \citeauthor{b:2002} \shortcite{b:2002}, where the concept of uniform stability was introduced and then extended to study randomized algorithms \cite{e:2005}.
The power of algorithmic stability for generalization analysis further inspired several other stability measures including uniform argument stability \cite{l:2017-u}, on-average loss stability \cite{s:2010,l:2020,l:2021}, on-average argument stability \cite{l:2020y,deoraoptimization}, locally elastic stability \cite{d:2021,lei2023stability} and Bayes stability \cite{l:2020x}.
While various stability measures were useful for deriving generalization bounds in expectation, applications of uniform stability implied elegant high-probability generalization bounds \cite{v:2019,b:2020,k:2021}.
Furthermore, the stability analysis promoted the study for the generalization of stochastic gradient descent (SGD) effectively \cite{d:2023}, which was considered under the paradigm of pairwise learning \cite{l:2020,l:2021} or pointwise and pairwise learning \cite{w:2023,chen2023stability}.
In contrast to SGD, a more sufficient generalization analysis of RCD is needed under the framework of pairwise learning.
It  provides us guidelines to apply RCD in large-scale optimization problems for pairwise learning.

Other than the approach based on uniform convergence or algorithmic stability, the generalization for pairwise learning was also studied from the perspective of algorithmic robustness \cite{b:2015,c:2016x}, convex analysis \cite{y:2016}, integral operators \cite{f:2016x,g:2017} and information theoretical analysis~\citep{dongtowards2024}.

\section{Preliminaries}\label{se3}

Let $S = \{ z_1,\ldots,z_n \}$ be a set drawn independently from a probability measure $\rho$ defined over a sample space $\mathcal{Z} = \mathcal{X} \times \mathcal{Y}$, where $\mathcal{X}$ is an input space and $\mathcal{Y} \subset \mathbb{R}$ is an output space.
For pairwise learning, our aim is to build a model $h : \mathcal{X} \mapsto \mathbb{R}$ or $h : \mathcal{X} \times \mathcal{X} \mapsto \mathbb{R}$ to simulate the potential mapping lying on $\rho$.
We further assume that the model is parameterized as $h_\mathbf{w}$ and the vector $\mathbf{w}$ belongs to a parameter space $\mathcal{W} \subseteq \mathbb{R}^d$.
As the essential feature of pairwise learning, the nonnegative loss function takes the form of $f : \mathcal{W} \times \mathcal{Z} \times \mathcal{Z} \mapsto \mathbb{R}$.
Since ranking and metric learning are the most popular applications of pairwise learning, we take them as examples here to show how the learning framework involves various learning tasks.
Besides, we present details of AUC maximization below, which is used as the experimental validation for our results.

\begin{example}\label{ex1}
{\rm (Ranking).}
Ranking models usually take the form of $h_{\mathbf{w}} : \mathcal{X} \mapsto \mathbb{R}$.
Given two instances $z = (x,y) , z' = (x',y')$, we adopt the ordering of $h_{\mathbf{w}}(x),h_{\mathbf{w}}(x')$ as the prediction of the ordering for $y,y'$.
As a result, the prediction $h_{\mathbf{w}}(x)-h_{\mathbf{w}}(x')$ and the true ordering $sgn(y-y')$ jointly formulate the approach to measure the performance of models.
The loss function in this problem is further defined as the pairwise formulation of $f(\mathbf{w};z,z') = \phi(sgn(y-y')(h_{\mathbf{w}}(x)-h_{\mathbf{w}}(x')))$.
Here we can choose the logistic loss $\phi(t) = \log(1+\exp(-t))$ or the hinge loss $\phi(t) = \max\left\{1-t,0\right\}$. 
\end{example}

\begin{example}\label{ex2}
{\rm (Supervised metric learning).}
For this problem with output space as $\mathcal{Y} = \left\{+1,-1\right\}$, the most usual aim is to learn a Mahalanobis metric $d_{\mathbf{w}}(x,x') = (x-x')^\top\mathbf{w}(x-x')$.
Under the parameter $\mathbf{w} \in \mathbb{R}^{d \times d}$ and the corresponding metric, we hope that the distance metric between two instances is consistent with the similarity of labels.
Let $\phi$ be the logistic or the hinge loss defined in Example \ref{ex1}.
We can formulate this metric learning problem under the framework of pairwise learning by the loss function as $f(\mathbf{w};z,z') = \phi(\tau(y,y')d_{\mathbf{w}}(x,x'))$, where $\tau(y,y') = 1$ if $y = y'$ and $\tau(y,y') = -1$ if $y \neq y'$.
\end{example}

\begin{example}\label{ex3}
{\rm (AUC Maximization).}
AUC score is widely applied to measure the performance of classification models for imbalanced data.
With the binary output space $\mathcal{Y} = \left\{+1,-1\right\}$, it shows the probability that the model $h_{\mathbf{w}} : \mathcal{X} \mapsto \mathbb{R}$ scores a positive instance higher than a negative instance.
Therefore, the loss function for AUC maximization usually takes the form of $f(\mathbf{w};z,z') = g(\mathbf{w}^\top (x-x'))\mathbb{I}_{[y=1,y'=-1]}$,
where $g$ can be chosen in the same way as $\phi$ in Example \ref{ex1} and $\mathbb{I}$ denotes the indicator function.
This demonstrates that AUC maximization also falls into the framework of pairwise learning.
\end{example}

With the pairwise loss function, the population risk is defined as the following
$$
F(\mathbf{w}) = \mathbb{E}_{z_i,z_j\sim\rho}\left[f(\mathbf{w};z_i,z_j)\right],
$$
which can measure the performance of $h_{\mathbf{w}}$ in real applications.
Since $\rho$ is unknown, we consider the empirical risk
$$
F_{S}(\mathbf{w}) = \frac{1}{n(n-1)} \sum\limits_{i,j\in[n]:i \neq j}f(\mathbf{w};z_i,z_j),
$$
where $[n] := \{1,\ldots,n\}$.
Let $\mathbf{w}^{*} = \arg \min_{\mathbf{w} \in \mathcal{W}}F(\mathbf{w})$ and $\mathbf{w}_{S} = \arg \min_{\mathbf{w} \in \mathcal{W}}F_S(\mathbf{w})$.
To approximate the best model $h_{\mathbf{w}^{*}}$, we apply a randomized algorithm $A$ to the training dataset $S$ and get a corresponding output model.
We then use $A(S)$ to denote the parameter of the output model.

Comparing the acquired parameter $A(S)$ and the best parameter $\mathbf{w}^{*}$, the excess risk $F(A(S)) - F(\mathbf{w}^{*})$ can quantify the performance of $A(S)$ appropriately.
We are interested in bounding the excess risk to provide theoretical supports for the practice of learning tasks.
To study the risk adequately, we introduce the following decomposition
\begin{align}\label{eq1}
F(A(S)) & - F(\mathbf{w}^{*}) =  \left[F(A(S)) - F(\mathbf{w}^{*})\right] - [F_{S}(A(S)) \notag \\
& - F_{S}(\mathbf{w}^{*})] + \left[F_{S}(A(S)) - F_{S}(\mathbf{w}^{*})\right].
\end{align}
Taking expectation on both sides of the above equation and noting $\mathbb{E}_{S}\left[F_S(\mathbf{w}^{*})\right] = F(\mathbf{w}^{*})$, we further decompose the excess risk as
\begin{align}\label{eq2}
\mathbb{E}_{S,A}\big[F(A(S)) & - F(\mathbf{w}^{*})\big]
 =  \, \, \mathbb{E}_{S,A}\left[F(A(S)) - F_{S}(A(S))\right] \notag \\
& + \mathbb{E}_{S,A}\left[F_{S}(A(S)) - F_{S}(\mathbf{w}^{*})\right].
\end{align}
The first and the second term on the right-hand side are referred to as estimation error (generalization gap) and optimization error respectively.
We incorporate SLT and optimization theory to control the two errors, respectively.

In this paper, we consider the learning framework below, which combines RCD and pairwise learning.

\begin{definition}\label{de1}
{\rm (RCD for pairwise learning).}
Let $\mathbf{w}_1 \in \mathcal{W}$ be the initial point and $\{\eta_t\}$ be a nonnegative stepsize sequence.
At the $t$-th iteration, we first draw $i_t$ from the discrete uniform distribution over $\{1,\ldots,d\}$ and then update along the $i_t$-th coordinate as
\begin{align}\label{eq3}
\mathbf{w}_{t+1} = \mathbf{w}_t - \eta_t \nabla_{i_t} F_S(\mathbf{w}_t)\mathbf{e}_{i_t},
\end{align}
where $\nabla_{i_t} F_S(\mathbf{w}_t)$ denotes the gradient of the empirical risk w.r.t. to the $i_t$-th coordinate and $\mathbf{e}_{i_t}$ is a vector with the $i_t$-th coordinate being $1$ and other coordinates being $0$.
\end{definition}

Considering the generalization for the above paradigm, we leverage the concept of algorithmic stability to handle the estimation error.
Algorithmic stability shows how algorithms react to perturbations of training datasets.
Various stability measures have been proposed to study the generalization gap in SLT, including uniform stability, argument stability and on-average stability.
Here we introduce the uniform stability and the on-average argument stability, with the latter being particularly useful for generalization analysis in this paper.
It is notable that we follow \citeauthor{l:2020y} \shortcite{l:2020y} in the definition of $\ell_1$ and $\ell_2$ on-average argument stabilities.
The $\ell_1$ on-average argument stability refers to the $\ell_1$-norm of the vector $(\|A(S)-A(S_1)\|_2,\ldots,\|A(S)-A(S_n)\|_2)$, while the $\ell_2$ on-average argument stability refers to the $\ell_2$-norm of this vector.

\begin{definition}\label{de2}
{\rm (Algorithmic Stability).}
Drawing independently from $\rho$, we get the following two datasets
$$
S = \{z_{1},\ldots,z_{n}\} \qquad and \qquad S' = \{z_{1}',\ldots,z_{n}'\}.
$$
We then replace $z_i$ in $S$ with $z_{i}'$ for any $i \in [n]$ and have
$$
S_i = \{z_{1},\ldots,z_{i-1},z_{i}',z_{i+1},\ldots,z_{n}\}.
$$
Let $\mathbf{x} \in \mathbb{R}^{d}$ be a vector of dimension $d$.
Then we denote the $p$-norm $\|\mathbf{x}\|_{p} = (\sum_{i=1}^{d}|\mathbf{x}_{i}|^{p})^{1/p}$
and show several stability measures below.\\
(a) Randomized algorithm $A$ is $\epsilon$-uniformly stable if for any $S,S_i \in \mathcal{Z}^{n}$ the following inequality holds
$$\sup\limits_{z,\tilde{z}}\left[f(A(S),z,\tilde{z}) - f(A(S_i),z,\tilde{z})\right] \leq \epsilon.
$$
(b) We say $A$ is $\ell_1$ on-average argument $\epsilon$-stable if
$$
\mathbb{E}_{S,S',A}\Big[\frac{1}{n}\sum_{i=1}^{n}\|A(S)-A(S_i)\|_{2}\Big]\leq \epsilon.
$$
(c) We say $A$ is $\ell_{2}$ on-average argument $\epsilon$-stable if
$$
\mathbb{E}_{S,S',A}\Big[\frac{1}{n}\sum_{i=1}^{n}\|A(S)-A(S_i)\|_{2}^{2}\Big] \leq \epsilon^2.
$$
\end{definition}

As indicated below, We prepare several necessary assumptions so that relative generalization bounds can be derived effectively.
Assumption \ref{as1} and Assumption \ref{as2} are useful for bounding the on-average argument stability.
Assumption \ref{as3} is mainly applied in the proof of the optimization error.
The other two assumptions show the convexity of the empirical risk, which is the basic condition for the establishment of relative theorems.

\begin{assumption}\label{as1}
For all $(z,z') \in \mathcal{Z} \times \mathcal{Z}$ and $\mathbf{w} \in \mathcal{W}$, the loss function satisfies the $G$-Lipschitz continuity condition as $\| \nabla f(\mathbf{w},z,z')\|_2 \leq G$.
\end{assumption}

\begin{assumption}\label{as2}
For all $(z,z') \in \mathcal{Z} \times \mathcal{Z}$ and $\mathbf{w},\mathbf{w}' \in \mathcal{W}$, the loss function is $L$-smooth as $\|\nabla f(\mathbf{w};z,z') - \nabla f(\mathbf{w'};z,z')\|_{2} \leq L\|\mathbf{w} - \mathbf{w'}\|_{2}$.
\end{assumption}

\begin{assumption}\label{as3}
For any $S$, $F_S$ has coordinate-wise Lipschitz continuous gradients with parameter $\widetilde{L} > 0$, i.e., we have the following inequality for all $\alpha \in \mathbb{R}$, $\mathbf{w} \in \mathcal{W}$, $i \in [d]$
$$
F_S(\mathbf{w} + \alpha \mathbf{e}_{i}) \leq F_S(\mathbf{w}) + \alpha\nabla_{i}F_S(\mathbf{w}) + \widetilde{L}\alpha^{2}/2.
$$
\end{assumption}

\begin{assumption}\label{as4}
$F_S$ is convex for any $S$, i.e., $F_S(\mathbf{w}) - F_S(\mathbf{w}') \geq \left< \mathbf{w}-\mathbf{w}',\nabla F_S(\mathbf{w}') \right>$ holds for all $\mathbf{w},\mathbf{w}' \in \mathcal{W}$.
\end{assumption}

\begin{assumption}\label{as5}
$F_S$ is $\sigma$-strongly convex for any $S$, i.e., the following inequality holds for all $\mathbf{w},\mathbf{w}' \in \mathcal{W}$
$$
F_S(\mathbf{w}) - F_S(\mathbf{w}') \geq \left< \mathbf{w}-\mathbf{w}',\nabla F_S(\mathbf{w}') \right> + \sigma\| \mathbf{w}-\mathbf{w}' \|_{2}^{2}/2,
$$
where $\left< \cdot,\cdot \right>$ denotes the inner product of two vectors.
\end{assumption}

With Definition \ref{de1} and Definition \ref{de2}, we can further quantify stabilities of RCD for pairwise learning.
Then we show connections between the estimation error and stability measures by the following lemma, which is the key to apply algorithmic stability effectively in generalization analysis.
While part (a) of Lemma \ref{le1} is motivated by the case of pointwise learning \cite{h:2016} and derived with the technique similar to \citeauthor{l:2021} \shortcite{l:2021}, part (b) and part (c) are introduced from \citeauthor{l:2021} \shortcite{l:2021} and \citeauthor{l:2020} \shortcite{l:2020} respectively.
In part (c), the base of the natural logarithm takes the symbol as $e$ and $\lceil\alpha\rceil$ means rounding up for $\alpha$.

\begin{lemma}\label{le1}
Let $S,S_i$ be constructed as Definition \ref{de2}.
Then we bound estimation errors with stability measures  below.\\
(a) Let Assumption \ref{as1} hold. Then the estimation error can be bounded by the $\ell_1$ on-average argument stability  below
\begin{multline}\label{eq4}
\mathbb{E}_{S,A} \left[F(A(S)) - F_S(A(S))\right]\\
\leq \frac{2G}{n} \sum\limits_{i=1}^{n} \mathbb{E}_{S,S',A} \left[\| A(S_{i}) - A(S) \|_2\right].
\end{multline}
(b) Let Assumption \ref{as2} hold. Then for any $\gamma >0$ we have the following estimation error bound with the $\ell_2$ on-average argument stability
\begin{align}\label{eq5}
\mathbb{E}_{S,A} & \left[F(A(S)) - F_S(A(S))\right] \leq \frac{L}{\gamma} \mathbb{E}_{S,A}\left[F_S(A(S))\right]\notag\\
& + \frac{2(L+\gamma)}{n} \sum\limits_{i=1}^{n}\mathbb{E}_{S,S',A}\left[\|A(S_i)-A(S)\|_{2}^{2}\right].
\end{align}
(c) Let $n$ denote the sample size of $S$.
Assume for any $S$ and $(z,z') \in \mathcal{Z} \times \mathcal{Z}$, $|f(A(S);z,z')| \leq R$ holds for  $R > 0$.
Suppose that $A$ is $\epsilon$-uniformly-stable and $\delta \in (0,1/e)$, then the following inequality holds with probability at least $1-\delta$
\begin{align}\label{eq6}
|F(A(S)) - & F_S(A(S))|
\leq 4\epsilon + e\Big(12\sqrt{2}R\sqrt{\frac{\log(e/\delta)}{n-1}} \notag \\
& + 48\sqrt{6}\epsilon\lceil\log_{2}(n-1)\rceil\log(e/\delta)\Big).
\end{align}
\end{lemma}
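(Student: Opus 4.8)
The plan is to handle the three parts separately: parts (a) and (b) rest on the same ghost-sample symmetrization, and part (c) is reduced to a known concentration bound for uniformly stable algorithms over a $U$-statistic. Throughout, the crucial device is that, since the loss is pairwise, the symmetrization must perturb two samples of $S$ simultaneously, after which a triangle-inequality decomposition brings everything back to the single-replacement quantity $\|A(S_i)-A(S)\|_2$ appearing in Definition \ref{de2}.

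For part (a), I would first express both risks over a common probability space using a ghost sample $S'=\{z_1',\dots,z_n'\}$. Writing $F(A(S))=\mathbb{E}_{\tilde z,\tilde z'}[f(A(S);\tilde z,\tilde z')]$ and replacing the fresh pair by $z_i',z_j'$ gives $\mathbb{E}_S[F(A(S))]=\frac{1}{n(n-1)}\sum_{i\ne j}\mathbb{E}_{S,S'}[f(A(S);z_i',z_j')]$, whereas $\mathbb{E}_S[F_S(A(S))]=\frac{1}{n(n-1)}\sum_{i\ne j}\mathbb{E}_S[f(A(S);z_i,z_j)]$. The key renaming step swaps $z_i\leftrightarrow z_i'$ and $z_j\leftrightarrow z_j'$ inside the first expectation, turning $A(S)$ into $A(S_{i,j})$ (the dataset with both the $i$-th and $j$-th samples replaced) evaluated on $z_i,z_j$. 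The gap then reads $\frac{1}{n(n-1)}\sum_{i\ne j}\mathbb{E}[f(A(S_{i,j});z_i,z_j)-f(A(S);z_i,z_j)]$, which Assumption \ref{as1} bounds by $\frac{G}{n(n-1)}\sum_{i\ne j}\mathbb{E}\|A(S_{i,j})-A(S)\|_2$. Using $\|A(S_{i,j})-A(S)\|_2\le\|A(S_{i,j})-A(S_i)\|_2+\|A(S_i)-A(S)\|_2$ and the i.i.d.\ relabeling identity $\mathbb{E}\|A(S_{i,j})-A(S_i)\|_2=\mathbb{E}\|A(S_j)-A(S)\|_2$ collapses the double sum to $\frac{2G}{n}\sum_i\mathbb{E}\|A(S_i)-A(S)\|_2$, which is \eqref{eq4}.

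For part (b), I would keep the same symmetrization but replace the Lipschitz bound by a smoothness-plus-self-bounding estimate, which does not require convexity. From $L$-smoothness, $f(A(S_{i,j});z_i,z_j)-f(A(S);z_i,z_j)\le\langle\nabla f(A(S);z_i,z_j),A(S_{i,j})-A(S)\rangle+\frac{L}{2}\|A(S_{i,j})-A(S)\|_2^2$; the self-bounding property $\|\nabla f(\mathbf w;z,z')\|_2^2\le 2Lf(\mathbf w;z,z')$ (valid since $f\ge0$ and is $L$-smooth) combined with Young's inequality $\sqrt{2Lf}\,\|v\|_2\le\frac{L}{\gamma}f+\frac{\gamma}{2}\|v\|_2^2$ controls the inner-product term. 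Summing over $i\ne j$, dividing by $n(n-1)$, and using $\frac{1}{n(n-1)}\sum_{i\ne j}\mathbb{E}[f(A(S);z_i,z_j)]=\mathbb{E}[F_S(A(S))]$ produces the $\frac{L}{\gamma}\mathbb{E}[F_S(A(S))]$ term; applying $\|A(S_{i,j})-A(S)\|_2^2\le 2\|A(S_{i,j})-A(S_i)\|_2^2+2\|A(S_i)-A(S)\|_2^2$ together with the same relabeling to the quadratic terms (now with an overall factor $4$) turns the coefficient $\frac{L+\gamma}{2}$ into $\frac{2(L+\gamma)}{n}$, yielding \eqref{eq5}. For part (c), the uniform bound $|f(A(S);z,z')|\le R$ lets one reduce to the high-probability generalization bound for $\epsilon$-uniformly stable algorithms; since a pairwise empirical risk is a second-order $U$-statistic, I would invoke the pairwise concentration inequality of \citeauthor{l:2020}\shortcite{l:2020} (a moment/McDiarmid-type argument for weakly dependent summands) with effective sample size $n-1$, which gives \eqref{eq6} directly.

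I expect the main obstacle to be the bookkeeping in the symmetrization for parts (a)–(b): one must carefully justify the two-sample renaming $z_i\leftrightarrow z_i',\,z_j\leftrightarrow z_j'$ and the relabeling identity $\mathbb{E}\|A(S_{i,j})-A(S_i)\|_2=\mathbb{E}\|A(S_j)-A(S)\|_2$ (and its squared analogue), which rely on $S,S'$ being i.i.d.\ and on $A$ depending on its input only through the dataset; only after this does the double sum over $i\ne j$ reduce cleanly to the single-replacement on-average argument stability with the stated constants. Once this is established, parts (a) and (b) are routine, and part (c) amounts to an application of the cited pairwise concentration result.
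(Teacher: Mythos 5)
Your proof of part (a) is exactly the paper's argument: the same ghost-sample symmetrization giving $\frac{1}{n(n-1)}\sum_{i\neq j}\mathbb{E}[f(A(S_{i,j});z_i,z_j)-f(A(S);z_i,z_j)]$, the same Lipschitz bound, triangle inequality through $A(S_i)$, and the relabeling identity $\mathbb{E}\|A(S_{i,j})-A(S_i)\|_2=\mathbb{E}\|A(S_j)-A(S)\|_2$. For parts (b) and (c) the paper simply imports the results from prior work, and your smoothness/self-bounding/Young sketch for (b) reproduces the cited bound with the correct constants $\frac{L}{\gamma}$ and $\frac{2(L+\gamma)}{n}$, while (c) is handled by citation just as in the paper, so the proposal is correct and essentially the same approach.
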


\begin{remark}\label{re1}
While estimation error bound (\ref{eq4}) is established under the Lipschitz continuity condition, (\ref{eq5}) remove this condition based on the $\ell_2$ on-average argument stability measure.
Inequality (\ref{eq5}) holds with the $L$-smoothness of the loss function, which replaces the Lipschitz constant in (\ref{eq4}) by the empirical risk.
Furthermore, if $A$ is $\ell_2$ on-average argument $\epsilon$-stable, we can take $\gamma = \sqrt{L\mathbb{E}_{S,A}\left[F_S(A(S))\right]} / (\sqrt{2}\epsilon)$ in part (b) and get $\mathbb{E}_{S,A}\left[F(A(S)) - F_S(A(S))\right] \leq \sqrt{2L\mathbb{E}_{S,A}\left[F_S(A(S))\right]}\epsilon + 2L\epsilon^2$.
If the empirical risk $\mathbb{E}_{S,A}\left[F_S(A(S))\right] = O(1/n)$, then we further know $\mathbb{E}_{S,A}\left[F(A(S)) - F_S(A(S))\right] = O(\epsilon^2 + \epsilon/\sqrt{n})$, which means the estimation error bound is well dependent on the stability measure $\epsilon$ via the small risk of the output model \cite{h:2016}.
Other than the generalization error in expectation, the link in high probability (\ref{eq6}) presents the convergence rate of $O(n^{-\frac{1}{2}} + \epsilon\sqrt{\log_2(n)} )$ for $\epsilon$-uniformly stable algorithm.
This result is achieved by combining a concentration inequality from \citeauthor{b:2020} \shortcite{b:2020} and the decoupling technique in \citeauthor{l:2020y} \shortcite{l:2020y}.
\end{remark}

Besides the estimation error, we need to tackle the optimization error to achieve complete excess risk bounds.
The optimization error analysis for pointwise learning can be directly extended to pairwise learning since they both use unbiased gradient estimations.
Since pointwise learning and pairwise learning mainly differ in terms of loss structure, Lemma \ref{le2} from pointwise learning also works for pairwise learning.

\begin{lemma}\label{le2}
Let $\{\mathbf{w}_t\}$ be produced by RCD (3) with nonincreasing step sizes $\eta_t \leq 1/\widetilde{L}$. Let Assumptions 3,4 hold, then the following two inequalities holds for any $\mathbf{w} \in \mathcal{W}$
\begin{align}\label{eq7}
\!\!\!\!\mathbb{E}_{A}[F_S & (\mathbf{w}_{t})\!-\!F_S(\mathbf{w})] \leq \frac{d\left( \|\mathbf{w}_{1}\!-\!\mathbf{w}\|_{2}^{2} \!+\! 2\eta_{1}F_{S}(\mathbf{w}_{1}) \right)}{2\sum_{j=1}^{t}\eta_j}
\end{align}
and
\begin{align}\label{eq8}
2\sum\limits_{j=1}^{t}\eta_{j}^{2}\mathbb{E}_A [F_S( & \mathbf{w}_{j}) - F_S(\mathbf{w})] \notag \\
& \leq d\eta_1\| \mathbf{w}_1 - \mathbf{w} \|_{2}^{2} + 2 d \eta_{1}^{2} F_S(\mathbf{w}_1).
\end{align}
Let Assumption 5 hold and $\mathbf{w}_S = \arg \min_{\mathbf{w} \in \mathcal{W}} F_S(\mathbf{w})$, then we have the following inequality
\begin{align}\label{eq9}
\mathbb{E}_{A}[F_S ( & \mathbf{w}_{t+1}) - F_S(\mathbf{w}_{S})] \notag\\ \notag\\
& \quad \leq (1-\eta_{t}\sigma/d)\mathbb{E}_{A}[F_{S}(\mathbf{w}_{t})- F_{S}(\mathbf{w}_{S})].
\end{align}
\end{lemma}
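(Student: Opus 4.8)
The plan is to follow the standard analysis of randomized coordinate descent, using only the coordinate-wise smoothness of Assumption \ref{as3} together with (strong) convexity, and to control the gradient-norm terms by a self-bounding argument rather than by Lipschitz continuity, which Lemma \ref{le2} does not assume. I would first establish a one-step descent inequality: substituting $\alpha = -\eta_t\nabla_{i_t}F_S(\mathbf{w}_t)$ into Assumption \ref{as3} gives $F_S(\mathbf{w}_{t+1}) \leq F_S(\mathbf{w}_t) - \eta_t\bigl(1 - \widetilde{L}\eta_t/2\bigr)\bigl(\nabla_{i_t}F_S(\mathbf{w}_t)\bigr)^2$, and since $\eta_t \leq 1/\widetilde{L}$ the factor $1 - \widetilde{L}\eta_t/2$ is at least $1/2$, so $F_S(\mathbf{w}_{t+1}) \leq F_S(\mathbf{w}_t) - \tfrac{\eta_t}{2}\bigl(\nabla_{i_t}F_S(\mathbf{w}_t)\bigr)^2$. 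Writing $\mathcal{F}_t = \sigma(i_1,\dots,i_{t-1})$ and taking the conditional expectation over the uniform index $i_t$, this becomes $\mathbb{E}_A[F_S(\mathbf{w}_{t+1})\mid\mathcal{F}_t] \leq F_S(\mathbf{w}_t) - \tfrac{\eta_t}{2d}\|\nabla F_S(\mathbf{w}_t)\|_2^2$, which also shows that $\mathbb{E}_A[F_S(\mathbf{w}_t)]$ is nonincreasing in $t$.

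For the convex case I would expand $\|\mathbf{w}_{t+1}-\mathbf{w}\|_2^2$ from the update rule, take $\mathbb{E}_A[\cdot\mid\mathcal{F}_t]$ using the unbiasedness $\mathbb{E}_A[\nabla_{i_t}F_S(\mathbf{w}_t)\mathbf{e}_{i_t}\mid\mathcal{F}_t] = \tfrac1d\nabla F_S(\mathbf{w}_t)$, and invoke convexity (Assumption \ref{as4}) to get $\langle\nabla F_S(\mathbf{w}_t),\mathbf{w}_t-\mathbf{w}\rangle \geq F_S(\mathbf{w}_t) - F_S(\mathbf{w})$, arriving at $\mathbb{E}_A[\|\mathbf{w}_{t+1}-\mathbf{w}\|_2^2\mid\mathcal{F}_t] \leq \|\mathbf{w}_t-\mathbf{w}\|_2^2 - \tfrac{2\eta_t}{d}\bigl(F_S(\mathbf{w}_t)-F_S(\mathbf{w})\bigr) + \tfrac{\eta_t^2}{d}\|\nabla F_S(\mathbf{w}_t)\|_2^2$. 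The last term I would absorb through the descent inequality, $\tfrac{\eta_t^2}{d}\|\nabla F_S(\mathbf{w}_t)\|_2^2 \leq 2\eta_t\bigl(F_S(\mathbf{w}_t) - \mathbb{E}_A[F_S(\mathbf{w}_{t+1})\mid\mathcal{F}_t]\bigr)$. Setting $\delta_t = F_S(\mathbf{w}_t)-F_S(\mathbf{w})$, rearranging, taking full expectations and summing over $t=1,\dots,T$, the distance terms telescope and the terms $2\eta_t\bigl(\mathbb{E}_A[F_S(\mathbf{w}_t)]-\mathbb{E}_A[F_S(\mathbf{w}_{t+1})]\bigr)$ telescope after using $\eta_{t+1}\leq\eta_t$ and $F_S\geq0$, giving $\sum_{t=1}^{T}\tfrac{2\eta_t}{d}\mathbb{E}_A[\delta_t] \leq \|\mathbf{w}_1-\mathbf{w}\|_2^2 + 2\eta_1 F_S(\mathbf{w}_1)$. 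Then (\ref{eq7}) follows from $\sum_{t=1}^{T}\eta_t\mathbb{E}_A[\delta_t] \geq \mathbb{E}_A[\delta_T]\sum_{t=1}^{T}\eta_t$ (monotonicity from the descent step), while (\ref{eq8}) follows by multiplying the displayed bound by $d\eta_1$ and using $\eta_t^2\leq\eta_1\eta_t$.

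For the strongly convex case I would use the gradient-domination inequality $\|\nabla F_S(\mathbf{w}_t)\|_2^2 \geq 2\sigma\bigl(F_S(\mathbf{w}_t)-F_S(\mathbf{w}_S)\bigr)$, which follows from Assumption \ref{as5} by minimizing the quadratic lower bound $F_S(\mathbf{v}) \geq F_S(\mathbf{w}_t) + \langle\nabla F_S(\mathbf{w}_t),\mathbf{v}-\mathbf{w}_t\rangle + \tfrac{\sigma}{2}\|\mathbf{v}-\mathbf{w}_t\|_2^2$ over $\mathbf{v}$; plugging it into the descent inequality and subtracting $F_S(\mathbf{w}_S)$ yields $\mathbb{E}_A[F_S(\mathbf{w}_{t+1})-F_S(\mathbf{w}_S)\mid\mathcal{F}_t] \leq (1-\eta_t\sigma/d)\bigl(F_S(\mathbf{w}_t)-F_S(\mathbf{w}_S)\bigr)$, and a further expectation gives (\ref{eq9}). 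I expect the main obstacle to be the cross term $\tfrac{\eta_t^2}{d}\|\nabla F_S(\mathbf{w}_t)\|_2^2$: because Lemma \ref{le2} does not assume Lipschitz continuity (Assumption \ref{as1}), it cannot be bounded by a constant and must instead be reabsorbed through the self-bounding descent inequality and then telescoped, which is precisely where the nonincreasing step-size condition and nonnegativity of $F_S$ enter; obtaining the last-iterate form of (\ref{eq7}) rather than an averaged bound additionally relies on the monotonicity of $\mathbb{E}_A[\delta_t]$.
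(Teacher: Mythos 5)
Your proposal is correct in substance, and it differs from the paper mainly in that the paper does not prove this lemma at all: it simply imports all three bounds from the pointwise-learning RCD analysis it cites, with the one-line justification that the optimization error analysis never uses the pairwise structure of the loss, only that RCD performs an unbiased coordinate-gradient step on $F_S$. Your derivation is exactly the standard argument behind that imported result (coordinate-wise smoothness descent step with $\eta_t\le 1/\widetilde{L}$, unbiasedness $\mathbb{E}_{i_t}[\nabla_{i_t}F_S(\mathbf{w}_t)\mathbf{e}_{i_t}]=\nabla F_S(\mathbf{w}_t)/d$, self-bounding absorption of the $\frac{\eta_t^2}{d}\|\nabla F_S(\mathbf{w}_t)\|_2^2$ term, Abel/telescoping with nonincreasing step sizes, and the gradient-domination inequality $\|\nabla F_S(\mathbf{w})\|_2^2\ge 2\sigma\bigl(F_S(\mathbf{w})-F_S(\mathbf{w}_S)\bigr)$ for the contraction), so it makes the lemma self-contained, which is a genuine plus over the paper's citation. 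One step needs repair: you obtain the weighted-sum bound by multiplying the already-summed inequality by $d\eta_1$ and invoking $\eta_t^2\le\eta_1\eta_t$, but that termwise comparison requires $\mathbb{E}_A[F_S(\mathbf{w}_t)]\ge F_S(\mathbf{w})$, and since $\mathbf{w}$ is arbitrary (in the theorem it is $\mathbf{w}^*$, whose empirical risk can exceed that of later iterates) some $\mathbb{E}_A[\delta_t]$ may be negative, flipping the inequality. The fix is routine: multiply the per-step inequality $\frac{2\eta_t}{d}\mathbb{E}_A[\delta_t]\le\bigl(b_t-b_{t+1}\bigr)+2\eta_t\bigl(a_t-a_{t+1}\bigr)$ (with $b_t=\mathbb{E}_A\|\mathbf{w}_t-\mathbf{w}\|_2^2$, $a_t=\mathbb{E}_A[F_S(\mathbf{w}_t)]$) by $d\eta_t$ \emph{before} summing, and telescope the two sums with the nonincreasing weights $\eta_t$ and $\eta_t^2$ using $a_t,b_t\ge 0$; this gives $2\sum_{j=1}^{t}\eta_j^2\mathbb{E}_A[\delta_j]\le d\eta_1\|\mathbf{w}_1-\mathbf{w}\|_2^2+2d\eta_1^2F_S(\mathbf{w}_1)$ without any sign assumption. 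The remaining parts, including the last-iterate form via monotonicity of $\mathbb{E}_A[F_S(\mathbf{w}_t)]$ and the strongly convex contraction, are sound as written.
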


In the arXiv version, Appendix B restates the above two lemmas and prepares some other lemmas.
The proof for part (a) of Lemma \ref{le1} is given in Appendix B.1.
Considering the stability analysis, we introduce the coercivity property of the gradient descent operator in Appendix B.3 \cite{h:2016}.
Then we show the self-bounding property of $L$-smooth functions in Appendix B.4 \cite{s:2010x}, which plays a key role in introducing empirical risks into the $\ell_2$ on-average argument stability.

\section{Main Results}\label{se4}

In this section, we show our results on generalization analysis of RCD for pairwise learning.
For both convex and strongly convex cases, we derive the on-average argument stability bounds and as well as the corresponding excess risk bounds.
Results are organized according to the convexity of the empirical risk.

\subsection{Generalization for Convex Case}\label{se41}

This subsection describes the $\ell_2$ on-average argument stabilities for the convex empirical risk.
Based on stability analysis, we consider generalization bounds in expectation under the setting that applies RCD for pairwise learning.

If the empirical risk is convex and $L$-smooth, then the gradient descent operator enjoys the coercivity property according to \citet{h:2016}.
Since taking expectations for the coordinate descent operator yields the gradient descent operator, the coercivity property is useful to bound the expansiveness of RCD updates in the stability analysis.
With the coercivity property of the coordinate descent operator in expectation, we further incorporate the self-bounding property of $L$-smooth functions to measure the $\ell_2$ on-average argument stability.
Then we handle the estimation error by plugging the stability measure into part (b) of Lemma \ref{le1}.
We finally introduce the optimization error and derive the corresponding excess risk bound.
The proof is given in Appendix C of the arXiv version.

\begin{theorem}\label{th3}
Let Assumptions \ref{as2}, \ref{as3}, \ref{as4} hold. Let $\{\mathbf{w}_{t}\}$, $\{\mathbf{w}_{t}^{(i)}\}$ be produced by (\ref{eq3}) with $\eta_t \leq 1/L$ based on $S$ and $S_i$ respectively.
Then the $\ell_2$ on-average argument stability satisfies
\begin{align}\label{eq10}
&\frac{1}{n}\sum\limits_{i=1}^{n}\mathbb{E}_{S,S',A}\Big[\|\mathbf{w}_{t+1} - \mathbf{w}_{t+1}^{(i)}\|_{2}^{2}\Big] \notag\\ &\leq\frac{128L}{n^2d}(\frac{t}{d}+1)\sum\limits_{j=1}^{t}
\eta_{j}^{2}\mathbb{E}_{S,A}[F_S(\mathbf{w}_j)].
\end{align}
Assume that the nonincreasing step size sequence $\{ \eta_t \}$ satisfies $\eta_t \leq 1/\widetilde{L}$.
Then, for any $\gamma \geq 0$, we have
\begin{align}\label{eq11}
& \mathbb{E}_{S,A}\left[F(\mathbf{w}_{T}) - F(\mathbf{w}^{*})\right]\notag\\
& = O\left(\frac{d(1+L\gamma^{-1})}{\sum_{t=1}^{T}\eta_t}
+\frac{L(L+\gamma)(T+d)} {n^2d}\right) \notag\\
& + O\left(\frac{L}{\gamma}+\frac{L(L+\gamma)(T+d)}{n^2d^2}
\sum\limits_{t=1}^{T}\eta_{t}^{2} \right) \times F(\mathbf{w}^{*}).
\end{align}
Furthermore, for a constant step size as $\eta_t \equiv \eta$, we choose $T \asymp n^{\frac{1}{2}}dL^{-\frac{1}{2}}$ and get
\begin{align}\label{eq12}
\mathbb{E}_{S,A}\left[F(\mathbf{w}_{T}) - F(\mathbf{w}^{*})\right] = O\Big( \sqrt{\frac{L}{n}}\Big).
\end{align}
Assuming that $F(\mathbf{w}^{*}) = O(Ln^{-1})$, we choose $T \asymp ndL^{-1}$ to give
\begin{align}\label{eq13}
\mathbb{E}_{S,A}\left[F(\mathbf{w}_{T}) - F(\mathbf{w}^{*})\right]
= O\left(\frac{L}{n} \right).
\end{align}
\end{theorem}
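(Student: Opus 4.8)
The backbone is the excess-risk decomposition \eqref{eq2}: the optimization error $\mathbb{E}_{S,A}[F_S(\mathbf{w}_T)-F_S(\mathbf{w}^{*})]$ is already handled by Lemma \ref{le2}, and the estimation error $\mathbb{E}_{S,A}[F(\mathbf{w}_T)-F_S(\mathbf{w}_T)]$ by part (b) of Lemma \ref{le1} once \eqref{eq10} is available. So the real work is the $\ell_2$ on-average argument stability bound \eqref{eq10}; after that, \eqref{eq11} is assembled from it together with the two lemmas, and \eqref{eq12}--\eqref{eq13} follow by tuning the free parameters $\gamma>0$ and $T$.

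For \eqref{eq10} I would couple the two runs by feeding them the \emph{same} coordinate indices $i_1,i_2,\dots$ and the same initialization $\mathbf{w}_1=\mathbf{w}_1^{(i)}$, and track $\Delta_{t}:=\mathbf{w}_t-\mathbf{w}_t^{(i)}$. From \eqref{eq3},
\[
\Delta_{t+1}=\Delta_t-\eta_t\big(\nabla_{i_t}F_S(\mathbf{w}_t)-\nabla_{i_t}F_{S_i}(\mathbf{w}_t^{(i)})\big)\mathbf{e}_{i_t},
\]
and I would split the coordinate gradient gap as $G_t+P_t$ with $G_t=\nabla F_S(\mathbf{w}_t)-\nabla F_S(\mathbf{w}_t^{(i)})$ (same objective, two points) and $P_t=\nabla F_S(\mathbf{w}_t^{(i)})-\nabla F_{S_i}(\mathbf{w}_t^{(i)})$ (the data perturbation). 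Expanding $\|\Delta_{t+1}\|_2^2$ and taking the conditional expectation over $i_t$ (uniform on $[d]$) turns every coordinate-wise quantity into $\tfrac1d$ times the corresponding full-gradient quantity --- this is exactly where ``the expectation of the coordinate-descent operator is the gradient-descent operator'' enters. Applying the coercivity property of the gradient of a convex $L$-smooth function to $\langle\Delta_t,G_t\rangle$ and using $\eta_t\le 1/L$ makes the entire $G_t$-contribution nonpositive, leaving
\[
\mathbb{E}_{i_t}\big[\|\Delta_{t+1}\|_2^2\big]\le \|\Delta_t\|_2^2+\frac{2\eta_t}{d}\|\Delta_t\|_2\|P_t\|_2+\frac{2\eta_t^2}{d}\|P_t\|_2^2 .
\]
Since $F_S$ and $F_{S_i}$ differ only in the $2(n-1)$ pairwise terms that involve the swapped sample, each weighted by $1/(n(n-1))$, I would bound each such pairwise gradient by the self-bounding inequality $\|\nabla f(\mathbf{w};z,z')\|_2^2\le 2Lf(\mathbf{w};z,z')$ and then use a symmetrization over the replaced index (exchanging $z_i$ with $z_i'$, in the style of the pairwise SGD analysis of Lei (2021)) so that, after averaging over $i$ and taking $\mathbb{E}_{S,S',A}$, the quantity $\tfrac1n\sum_i\mathbb{E}\|P_t\|_2^2$ is $O\!\big(L n^{-2}\,\mathbb{E}_{S,A}[F_S(\mathbf{w}_t)]\big)$. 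Finally I would handle the cross term $\tfrac{2\eta_t}{d}\|\Delta_t\|_2\|P_t\|_2$ by Young's inequality with a parameter of order $d/(t\eta_1)$, so that the induced multiplicative factor $\prod_{r}(1+\eta_r\lambda/d)$ stays $O(1)$ while the $\|P_t\|_2^2$-penalty picks up only a factor $O(\eta_t\, t/d)$; unrolling the recursion from $\Delta_1=0$ then yields \eqref{eq10} with the $(t/d+1)$ factor.

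For \eqref{eq11} I would take \eqref{eq2} with $\mathbf{w}=\mathbf{w}^{*}$: the optimization term is bounded by \eqref{eq7}, and the estimation term by plugging \eqref{eq10} into part (b) of Lemma \ref{le1}, which also needs $\mathbb{E}[F_S(\mathbf{w}_T)]\le \mathbb{E}[F_S(\mathbf{w}_T)-F_S(\mathbf{w}^{*})]+F(\mathbf{w}^{*})$ and the bound on $\sum_{t}\eta_t^2\mathbb{E}[F_S(\mathbf{w}_t)]$ supplied by \eqref{eq8} (again with $\mathbf{w}=\mathbf{w}^{*}$, using $\mathbb{E}_S[F_S(\mathbf{w}^{*})]=F(\mathbf{w}^{*})$). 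Treating the initialization quantities $\|\mathbf{w}_1-\mathbf{w}^{*}\|_2^2$ and $F_S(\mathbf{w}_1)$ as constants and collecting terms gives exactly the form of \eqref{eq11}: a part that is $O(\cdot)$ on its own and a part multiplied by $F(\mathbf{w}^{*})$, with $(t/d+1)$ rewritten as $(T+d)/d$ at $t=T$. For \eqref{eq12} I would set $\eta_t\equiv\eta$ (constant, $\le\min\{1/L,1/\widetilde L\}$), so $\sum_{t\le T}\eta_t=T\eta$ and $\sum_{t\le T}\eta_t^2=T\eta^2$, then choose $\gamma$ of order $\sqrt{Ln}$ (which makes the $F(\mathbf{w}^{*})$-coefficient $O(\sqrt{L/n})$) and $T\asymp n^{1/2}dL^{-1/2}$ to balance the $d/(T\eta)$ term against the $(T+d)/(n^2d)$ term, producing $O(\sqrt{L/n})$. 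For \eqref{eq13}, the low-noise hypothesis $F(\mathbf{w}^{*})=O(L/n)$ makes the $F(\mathbf{w}^{*})$-block lower order (with $\gamma$ of order $L$), relaxes the constraint on $\gamma$, and permits the larger choice $T\asymp ndL^{-1}$, which drives the self-standing part down to $O(L/n)$.

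The main obstacle is \eqref{eq10}. Because Theorem \ref{th3} deliberately avoids a Lipschitz assumption, the perturbation term $P_t$ cannot be bounded by a constant and must be carried in self-bounding ($\sqrt{f}$) form, which couples the stability recursion to $\mathbb{E}_{S,A}[F_S(\mathbf{w}_t)]$; getting the symmetrization right so that the $S_i$-dependent averages of $f$ collapse to (a constant times) $\mathbb{E}_{S,A}[F_S(\mathbf{w}_t)]$, and tuning the Young parameter so the recursion contributes precisely the factor $(t/d+1)$ rather than an exponentially growing one, are the delicate points. Once \eqref{eq10} is in hand, the passage to \eqref{eq11}--\eqref{eq13} is essentially bookkeeping built on the off-the-shelf Lemmas \ref{le1} and \ref{le2}.
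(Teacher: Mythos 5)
Your overall architecture is the paper's: the stability bound is the real content, and the passage to the excess risk bounds is exactly the paper's bookkeeping (plug the stability bound into part (b) of Lemma \ref{le1}, bound the optimization error and the weighted sum $\sum_j\eta_j^2\mathbb{E}[F_S(\mathbf{w}_j)]$ via Lemma \ref{le2} with the decomposition $F_S(\mathbf{w}_j)=F_S(\mathbf{w}_j)-F_S(\mathbf{w}^*)+F_S(\mathbf{w}^*)$ and $\mathbb{E}_S[F_S(\mathbf{w}^*)]=F(\mathbf{w}^*)$, then tune $\gamma$ and $T$; your choices of $\gamma$ differ from the paper's $\gamma=LT/d$ and $\gamma=nd/T$ but give the same rates). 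Your treatment of the recursion also differs in a harmless way: you absorb the cross term $\frac{2\eta_t}{d}\|\Delta_t\|_2\|P_t\|_2$ by Young's inequality with a horizon-dependent parameter, whereas the paper unrolls the recursion, introduces $\widetilde{\Delta}_{t,i}=\max_{k\le t}\big(\mathbb{E}_A\|\mathbf{w}_k-\mathbf{w}_k^{(i)}\|_2^2\big)^{1/2}$, solves the quadratic inequality $x^2\le ax+b$, and uses $(\sum_{j\le t}a_j)^2\le t\sum_{j\le t}a_j^2$; both routes produce the $(t/d+1)$ factor (yours exactly so for the constant step sizes used in the final rates, with a minor per-step choice of the Young parameter needed to recover the $\eta_j^2$ weights for general nonincreasing steps).

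The genuine gap is in your splitting of the gradient difference. With $G_t=\nabla F_S(\mathbf{w}_t)-\nabla F_S(\mathbf{w}_t^{(i)})$ and $P_t=\nabla F_S(\mathbf{w}_t^{(i)})-\nabla F_{S_i}(\mathbf{w}_t^{(i)})$, the perturbation is evaluated entirely at $\mathbf{w}_t^{(i)}$ and consists of terms like $\nabla f(\mathbf{w}_t^{(i)};z_i,z_l)-\nabla f(\mathbf{w}_t^{(i)};z_i',z_l)$. After self-bounding, the group $f(\mathbf{w}_t^{(i)};z_i',z_l)$ is matched (it symmetrizes to $\mathbb{E}[f(\mathbf{w}_t;z_i,z_l)]$, an in-sample term), but the group $f(\mathbf{w}_t^{(i)};z_i,z_l)$ is not: swapping $z_i\leftrightarrow z_i'$ turns it into $\mathbb{E}[f(\mathbf{w}_t;z_i',z_l)]$, a partially out-of-sample risk of $\mathbf{w}_t$, which cannot be collapsed to $\mathbb{E}_{S,A}[F_S(\mathbf{w}_t)]$ without essentially proving the generalization statement you are after; it also cannot be fed into the optimization bound (\ref{eq8}) in the next step. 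So your claim $\frac1n\sum_i\mathbb{E}\|P_t\|_2^2=O\big(Ln^{-2}\mathbb{E}_{S,A}[F_S(\mathbf{w}_t)]\big)$ does not follow from the symmetrization you describe. The paper engineers its decomposition precisely to avoid this: it extracts the common part $F_{S_{-i}}$ (the empirical risk with sample $i$ deleted, still convex and $L$-smooth, so the coercivity argument goes through unchanged) and leaves a perturbation of the matched form $\nabla f(\mathbf{w}_t;z_k,z_i)-\nabla f(\mathbf{w}_t^{(i)};z_k,z_i')$, i.e., each iterate paired with its own dataset's sample, so that every self-bounded loss term reduces to $\mathbb{E}_{S,A}[F_S(\mathbf{w}_j)]$ under the $z_i\leftrightarrow z_i'$ symmetry. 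Replacing your common term $F_S$ by $F_{S_{-i}}$ repairs the argument; the rest of your plan then goes through.
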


\begin{remark}\label{re2}
For pairwise learning, Eq.~(\ref{eq10}) shows that RCD enjoys the $\ell_2$ on-average argument stability is of the order of $O\big( L(t+d) \sum_{j=1}^{t} \eta_{j}^{2}\mathbb{E}_{S,A}[F_S(\mathbf{w}_j)] / (n^2 d^2) \big).$
This bound means that the output model of RCD becomes more and more stable with the sample size increasing or the number of iterations decreasing.
In further detail, since the estimation error can be bounded by the stability bound according to (\ref{eq5}), decreasing the number of iterations is beneficial to controlling the estimation error.
However, increasing the number of iterations corresponds to the optimization process, which is the key to control the optimization error.
As a result, the early stopping strategy is adopted to balance the  estimation and optimization for a good generalization.
\end{remark}

\begin{remark}\label{re3}
Fixing $F(\mathbf{w}^*)$, we choose an appropriate number of iterations for the excess risk bound (\ref{eq12}).
Besides, we incorporate $F(\mathbf{w}^{*})$ into the excess risk bound and get the convergence rate (\ref{eq13}).
It is obvious that (\ref{eq13}) exploits the low noise setting to yield the optimistic bound \cite{s:2010x}.
Furthermore, since RCD updates in expectation is closely related to the gradient descent operator, we consider the batch setting \cite{nx:2023} and find that results here are identical to those of full-batch GD \cite{n:2023}.
Turning to SGD for pairwise learning \cite{l:2021}, the $\ell_2$ on-average argument stability takes a slower rate as $O(1/n)$ under the same setting.
The excess risk bound can achieve the rate of $(1/\sqrt{n})$ in a general setting with $\eta_t = \eta \asymp 1/\sqrt{T}$ and $T \asymp n$.
With the low noise setting $F(\mathbf{w}^{*}) = O(n^{-1})$, the optimistic bound $O(1/n)$ is also derived.
\end{remark}

\begin{figure*}[t]
    \captionsetup{labelformat=empty}
    \captionsetup{aboveskip=3pt}
    \centering

    \begin{minipage}{0.21\textwidth}
        \includegraphics[width=\linewidth]{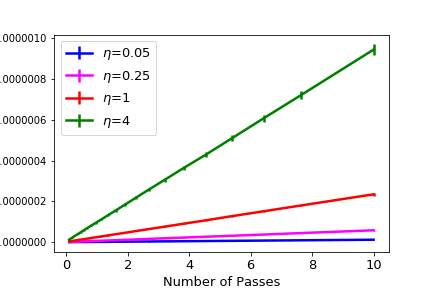}
        \includegraphics[width=\linewidth]{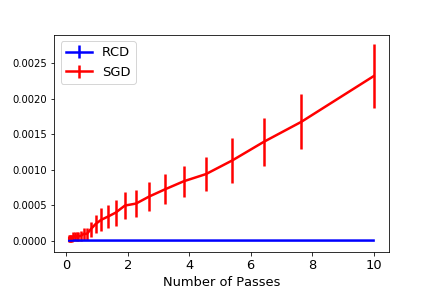}
        \caption{(a) hinge for a3a} \label{fig1}
    \end{minipage}\hfill
    \begin{minipage}{0.21\textwidth}
        \includegraphics[width=\linewidth]{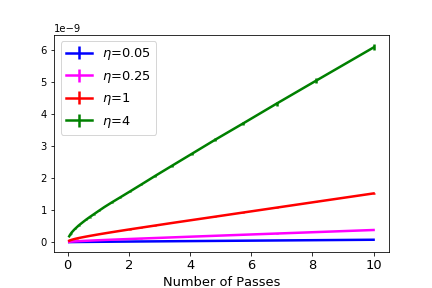}
        \includegraphics[width=\linewidth]{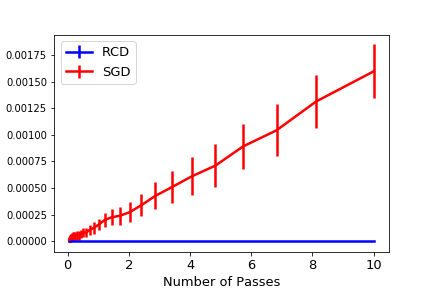}
        \caption{(b) hinge for gisette}
    \end{minipage}\hfill
    \begin{minipage}{0.21\textwidth}
        \includegraphics[width=\linewidth]{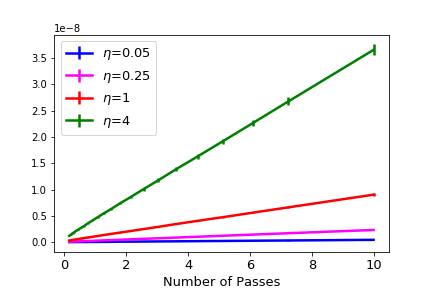}
        \includegraphics[width=\linewidth]{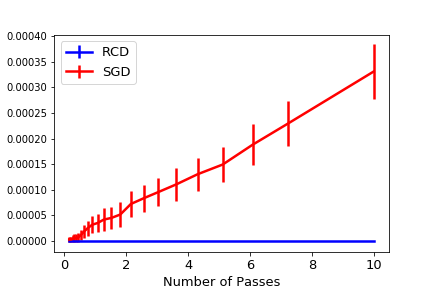}
        \caption{(c) hinge for madelon}
    \end{minipage}\hfill
    \begin{minipage}{0.21\textwidth}
        \includegraphics[width=\linewidth]{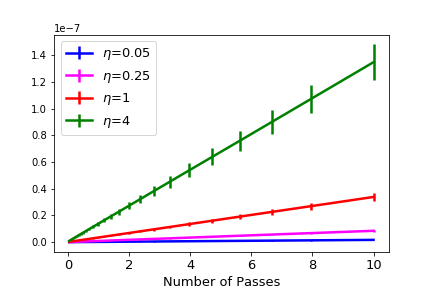}
        \includegraphics[width=\linewidth]{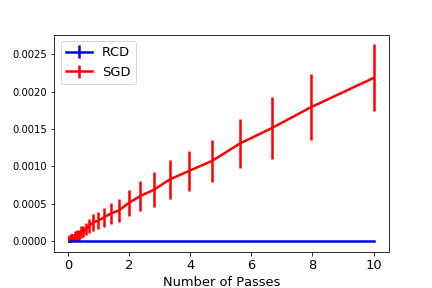}
        \caption{(d) hinge for usps}
    \end{minipage}

    \begin{minipage}{0.21\textwidth}
        \includegraphics[width=\linewidth]{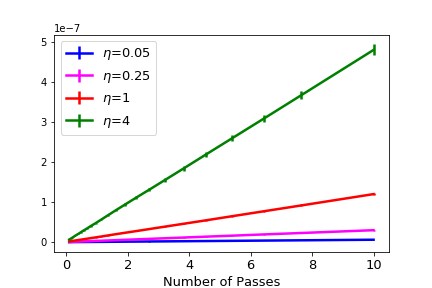}
        \includegraphics[width=\linewidth]{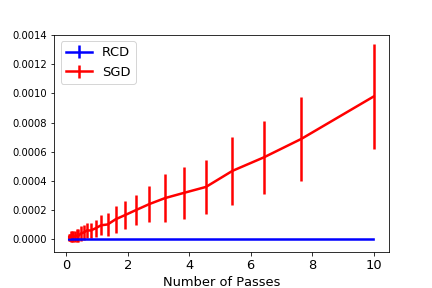}
        \caption{(e) logistic for a3a}
    \end{minipage}\hfill
    \begin{minipage}{0.21\textwidth}
        \includegraphics[width=\linewidth]{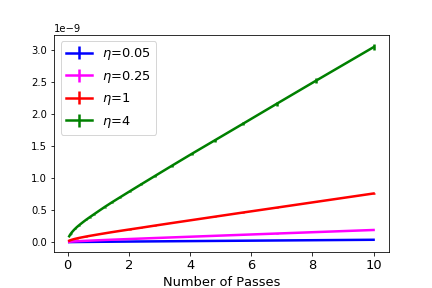}
        \includegraphics[width=\linewidth]{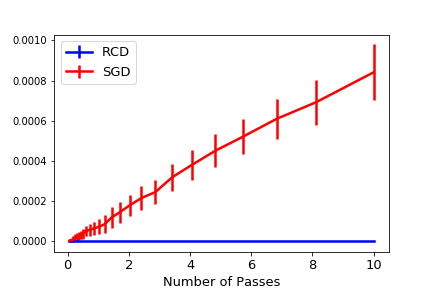}
        \caption{(f) logistic for gisette}
    \end{minipage}\hfill
    \begin{minipage}{0.21\textwidth}
        \includegraphics[width=\linewidth]{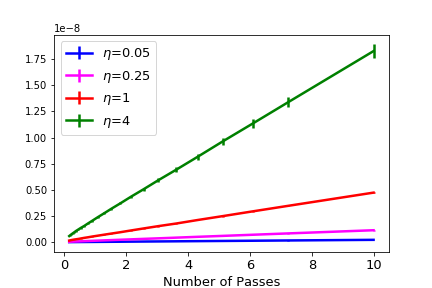}
        \includegraphics[width=\linewidth]{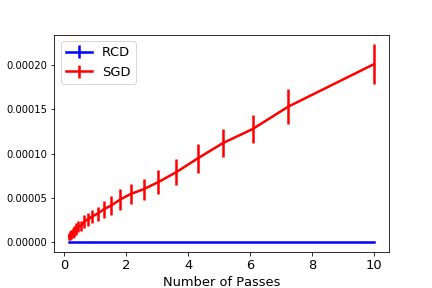}
        \caption{(g) logistic for madelon}
    \end{minipage}\hfill
    \begin{minipage}{0.21\textwidth}
        \includegraphics[width=\linewidth]{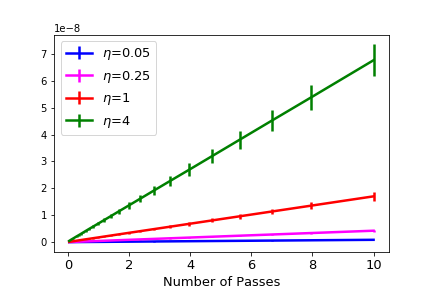}
        \includegraphics[width=\linewidth]{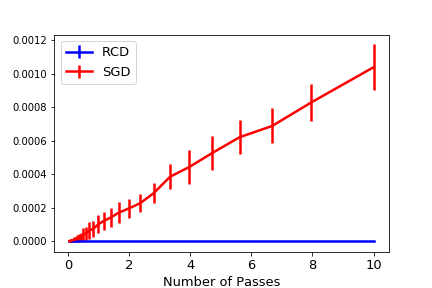}
        \caption{(h) logistic for usps}
    \end{minipage}

    \caption{Figure 1: Euclidean distance $\Delta_t$ as a function of the number of passes for the hinge loss.}
\end{figure*}

\subsection{Generalization for Strongly Convex Case}\label{se42}

This subsection presents generalization analysis of RCD for pairwise learning in a strongly convex setting.
In the strongly convex case of pointwise learning \cite{p:2021}, the $\ell_2$ on-average argument stability and the corresponding generalization bound were not taken into consideration.
Therefore, we not only measure the stability here but also derive the excess risk bound for the strongly convex empirical risk.
We show the proof in Appendix D of the arXiv version.

\begin{theorem}\label{th4}
Let Assumptions \ref{as2}, \ref{as3}, \ref{as5} hold.
Let $\{\mathbf{w}_{t}\}$, $\{\mathbf{w}_{t}^{(i)}\}$ be produced by (\ref{eq3}) with $\eta_t \leq \beta/L$ for any $\beta \in (0,1)$ based on $S$ and $S_i$, respectively.
Then the $\ell_2$ on-average  argument stability is
\begin{align}\label{eq14} &\frac{1}{n}\sum\limits_{i=1}^{n}\mathbb{E}_{S,S',A}\left[\|\mathbf{w}_{t+1} - \mathbf{w}_{t+1}^{(i)}\|_{2}^{2}\right] \notag\\
&\leq \frac{128L}{n^2d} \sum\limits_{j=1}^{t}
\Big(\frac{t}{d}\prod\limits_{k=j+1}^{t}
\big(1-\frac{2\eta_k(1-\beta)(n-2)\sigma}{nd}\big)^2 \notag\\
&+ \prod\limits_{k=j+1}^{t}
\big(1-\frac{2\eta_k(1-\beta)(n-2)\sigma}{nd}\big)\Big)
\eta_{j}^{2}\mathbb{E}_{S,A}[F_S(\mathbf{w}_j)].
\end{align}
Let step sizes be fixed as $\eta_t \equiv \eta \leq 1 / \widetilde{L}$. For any $\gamma \geq 0$, we develop the excess risk bound as
\begin{align}\label{eq15}
&\mathbb{E}_{S,A}\left[F(\mathbf{w}_{T+1}) - F_S(\mathbf{w}_S)\right]\notag\\
&= O\left(\big(1+\frac{L}{\gamma}\big)(1-\eta\sigma/d)^T + \frac{L(d+T)(L+\gamma)}{(n-2)^2\sigma^2(1-\beta)^2}\right)  \notag\\
& + O\left(\frac{L}{\gamma} + \frac{L(d+T)(L+\gamma)}{(n-2)^2\sigma^2(1-\beta)^2}\right)\times \mathbb{E} [ F_S(\mathbf{w}_S) ].
\end{align}
Choosing $T \asymp d\sigma^{-1}\log(n\sigma L^{-1})$ yields
\begin{align}\label{eq16}
\mathbb{E}_{S,A}  [F(\mathbf{w}_{T+1})  - F_S(\mathbf{w}_S)]
= O\left(\frac{Ld^{\frac{1}{2}}}{n\sigma^{\frac{3}{2}}}
\sqrt{\log\big(\frac{n\sigma}{L}\big)}\right).
\end{align}
\end{theorem}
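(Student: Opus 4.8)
The plan is to mirror the structure of the convex case (Theorem \ref{th3}) but to exploit strong convexity to get a \emph{contractive} gradient descent operator, which yields the product terms $\prod_{k=j+1}^{t}(1 - 2\eta_k(1-\beta)(n-2)\sigma/(nd))$ in the stability bound. First I would analyze the expansiveness of a single RCD update on the neighboring sequences $\{\mathbf{w}_t\}$ and $\{\mathbf{w}_t^{(i)}\}$. Conditioning on the coordinate $i_t$, the two sequences differ because $F_S$ and $F_{S_i}$ differ in exactly one (of $n(n-1)$) summand blocks involving $z_i$ versus $z_i'$. Taking expectation over $i_t$ turns the coordinate-descent step into a full gradient-descent step on $F_S$ (respectively $F_{S_i}$), so the relevant operator is $\mathbf{w} \mapsto \mathbf{w} - (\eta_t/d)\nabla F_S(\mathbf{w})$. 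For an $L$-smooth, $\sigma$-strongly convex function and step $\eta_t/d \le \beta/(Ld)$, this operator is $(1 - \eta_t\sigma/(nd)\cdot c)$-contractive for a suitable constant; more precisely the relevant contraction factor is $1 - 2\eta_t(1-\beta)\sigma/(nd)$ up to the $(n-2)/n$ factor coming from the fraction of summands in $F_S$ that do not involve the perturbed point. I would use the coercivity property of the gradient descent operator (Appendix B.3) together with strong convexity exactly as in \citet{h:2016}, combined with the self-bounding property of $L$-smooth functions (Appendix B.4) to convert the residual $\nabla f$-type terms into $\eta_j^2 F_S(\mathbf{w}_j)$ contributions, which is where the factor $\mathbb{E}_{S,A}[F_S(\mathbf{w}_j)]$ and the constant $128L$ enter.

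Next I would set up the recursion for $\delta_{t+1} := \frac{1}{n}\sum_{i}\mathbb{E}[\|\mathbf{w}_{t+1} - \mathbf{w}_{t+1}^{(i)}\|_2^2]$. Expanding the squared norm, the "stable" part contributes the contraction factor squared, $\big(1 - 2\eta_t(1-\beta)(n-2)\sigma/(nd)\big)^2 \delta_t$, while the "perturbation" part contributes an $O(L\eta_t^2 F_S(\mathbf{w}_t)/(n^2 d))$ term (the $1/(n^2 d)$ coming from: $1/n$ for averaging, an extra $1/n$ because only on an $O(1/n)$ fraction of realizations does the differing sample get used in the gradient, and $1/d$ from the coordinate expectation). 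There is also a cross term, which I would bound by Cauchy–Schwarz and absorb into the two main terms — this is the source of the two separate summands in \eqref{eq14} (one carrying $t/d$ and the product of squares, one carrying just the product). Unrolling this recursion from $j=1$ to $t$ gives exactly \eqref{eq14}. Then plugging \eqref{eq14} into part (b) of Lemma \ref{le1} controls the estimation error, plugging \eqref{eq9} of Lemma \ref{le2} (the linear convergence $(1-\eta\sigma/d)^T$) controls the optimization error, and adding the two — after choosing $\gamma$ and bounding $\sum_{j}\eta_j^2 F_S(\mathbf{w}_j)$ via \eqref{eq9} so that the geometric sum of the contraction factors collapses to an $O(1/\sigma)$ factor — yields \eqref{eq15}.

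Finally, for \eqref{eq16} I would specialize to constant step size $\eta \asymp 1/\widetilde{L}$ (treating $\widetilde L$, $L$ as comparable), so the optimization term $(1-\eta\sigma/d)^T$ with $T \asymp d\sigma^{-1}\log(n\sigma/L)$ becomes $O(L/(n\sigma))$; the estimation term is $O(L(d+T)(L+\gamma)/((n-2)^2\sigma^2))$, and optimizing over $\gamma$ (roughly $\gamma \asymp \sqrt{L(d+T)/n^2}$ type balance as in Remark \ref{re1}) leaves the dominant contribution of order $\sqrt{L^2(d+T)/(n^2\sigma^2)}$; substituting $T \asymp d\sigma^{-1}\log(n\sigma/L)$ makes $d+T \asymp d\sigma^{-1}\log(n\sigma/L)$ and produces the stated $O\big(Ld^{1/2}(n\sigma^{3/2})^{-1}\sqrt{\log(n\sigma/L)}\big)$.

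The main obstacle I anticipate is the bookkeeping of the contraction factor: showing carefully that the $(n-2)/n$ factor is the correct one (i.e., that the perturbed block contributes a non-contractive but bounded piece while the remaining $(n-2)$-fraction is genuinely $\sigma$-contractive), and that the coordinate-wise expectation of the RCD operator is handled via coercivity \emph{without} losing the strong-convexity contraction — this is precisely the step the paper flags as novel relative to \citet{p:2021}, who did not obtain an $\ell_2$ on-average argument stability bound in the strongly convex case. Getting the cross term and the two-summand structure of \eqref{eq14} to come out cleanly, rather than with extra $\sum\eta_k$ factors, is the delicate part.
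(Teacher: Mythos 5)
Your proposal follows essentially the same route as the paper's proof: decompose $F_S$ into $F_{S_{-i}}$ (which is $\frac{n-2}{n}\sigma$-strongly convex) plus the $O(1/n)$-weighted pairs containing $z_i$, apply the strongly convex coercivity of the gradient-descent operator after taking the coordinate expectation, use the self-bounding property and Cauchy--Schwarz to unroll the recursion into the two-summand stability bound, and then combine part (b) of Lemma \ref{le1} with the linear optimization convergence and choices $\gamma \asymp n\sigma/\sqrt{T}$, $T \asymp d\sigma^{-1}\log(n\sigma L^{-1})$ to reach \eqref{eq15} and \eqref{eq16}. One minor correction that does not affect the plan: the extra $1/n$ factor arises because the $2(n-1)$ summands involving $z_i$ carry total weight $O(1/n)$ in the full pairwise gradient (the perturbed sample is used at every RCD iteration), not because the differing sample is only used on an $O(1/n)$ fraction of realizations as in SGD.
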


\begin{remark}\label{re4}
As shown in (\ref{eq14}), the stability measure involves a weighted sum of empirical risks.
This demonstrates that low risks of output models can improve the stability along the training process.
The measure also shows information including the convexity parameter $\sigma$ and learning rates $\eta_j$ which are closely associated with the interplay between RCD and training datasets.
Furthermore, the strong convexity of the empirical risk obviously leads to a better stability as compared to the convex case (\ref{eq10}).
\end{remark}

\begin{remark}\label{re5}
The convergence rate (\ref{eq16}) gives the choice of $T$ to balance the estimation and optimization.
Indeed, the optimal convergence rate lies between $O(1/(n\sigma))$ and $O\big(\sqrt{\log (n\sigma)}/(n\sigma^{\frac{3}{2}})\big)$, for which the corresponding choices of $T$ are smaller than that we give.
It is notable that the approximate optimal rate here almost matches the optimistic bound (\ref{eq13}).
Besides, the strong convexity promotes the fast computing $T \asymp \log (n)$ as compared to $T \asymp n$ under the convex case.
Results here are the same as those for full-batch GD \cite{n:2023}, which can verify the theorem since the expectation for RCD leads to the gradient descent operator.
Considering SGD, \citeauthor{l:2021} \shortcite{l:2021} present the generalization bounds for pairwise learning.
Under the same setting of smoothness and strong convexity, SGD achieves the excess risk bound $O(1/(n\sigma))$.
However, the convergence rate of SGD requires the number of iterations as $T \asymp O(n/\sigma)$ and a small $F(\mathbf{w}^{*})$.
\end{remark}

\section{ Experimental Verification}\label{se5}

In this section, we choose the example of AUC maximization to verify the theoretical results on stability measures.
Results are shown in Figure \ref{fig1}.

Here we choose the hinge loss and the logistic loss.
We consider the datasets from LIBSVM \cite{c:2011} and measure the stability of RCD on these datasets, whose details are presented in Appendix E of the arXiv version.
We follow the settings of SGD for pairwise learning \cite{l:2021} and compare the results of RCD and SGD.
In each experiment, we randomly choose $80$ percents of each dataset as the training set $S$.
Then we perturb a a signal example of $S$ to construct the neighboring dataset $S'$.
We apply RCD or SGD to $S,S'$ and get two iterate sequences, with which we plot the Euclidean distance $\Delta_t =  \| \mathbf{w}_t - \mathbf{w}_t'\|_2$ for each iteration.
While the learning rates are set as $\eta_t = \eta/\sqrt{T}$ with $\eta \in \left\{ 0.05, 0.25, 1, 4 \right\}$ for RCD, we only compare RCD and SGD under the setting of $\eta = 0.05$.
Letting $n$ be the sample size, we report $\Delta_t$ as a function of $T/n$ (the number of passes).
We repeat the experiments $100$ times, and consider the average and the standard deviation.

Since both loss functions that we choose are convex, the following discussions are based on the theorems in convex case.
Considering the comparison between SGD and RCD, while the term $(T/n)^2$ dominates the convergence rates of stability bounds for SGD according to \citeauthor{l:2021} \shortcite{l:2021}, the on-average argument stability bound (\ref{eq10}) for RCD takes the order of $O(T/n^2)$.
The experimental results for the comparison are consistent with the theoretical stability bounds.
Furthermore, the Euclidean distance under the logistic loss is significantly smaller than that under the hinge loss, which is consistent with the the discussions of \citeauthor{l:2021} \shortcite{l:2021} for smooth and nonsmooth problems.

\section{Conclusion}\label{se6}

In this paper, we study the generalization performance of RCD for pairwise learning.
We measure the on-average argument stability develop the corresponding excess risk bound.
Results for the convex empirical risk show us how the early-stopping strategy can balance estimation and optimization.
The excess risk bounds enjoy the convergence rates of $O(1/\sqrt{n})$ and $O(\sqrt{\log(n)} / n)$ under the convex and strongly convex cases, respectively.
Furthermore, results under the convex case exploit different noise settings to explore better generalizations.
While the low noise setting $F(\mathbf{w}^{*}) = O(1/n)$ improves the convergence rates from $O(1/\sqrt{n})$ to $O(1/n)$ for convex objective functions, the strong convexity allows the almost optimal convergence rate of $O(1/n)$ with a significantly faster computing as $T \asymp \log (n)$.

There remain several questions for further investigation.
Explorations under the nonparametric or the non-convex case are important for extending the applications of RCD.
RCD for the specific large-scale matrix optimization also deserves a fine-grained generalization analysis.

\onecolumn
\setcounter{definition}{0}
\setcounter{assumption}{0}
\setcounter{theorem}{0}
\setcounter{equation}{0}

\section*{Appendix for  ``Stability-Based Generalization Analysis of Randomized Coordinate Descent
 for Pairwise Learning''}

\section{A \quad Definitions and Assumptions}

\subsection{A.1 \, RCD for Pairwise Learning}

\begin{definition}\label{de1}
{\rm \textbf{(restated)}}.
{\rm (RCD for pairwise learning).}
To present the pairwise learning paradigm clearly, we show the corresponding empirical risk as below
\begin{align}\label{eq1}
F_{S}(\mathbf{w}) = \frac{1}{n(n-1)} \sum\limits_{i,j\in[n]:i \neq j}f(\mathbf{w};z_i,z_j),
\end{align}
where the pairwise loss function $f : \mathcal{W} \times \mathcal{Z} \times \mathcal{Z} \mapsto \mathbb{R}$ measures the model performance on instance pairs and the training dataset is given in Definition \ref{de2}.
This empirical risk can further lead to the following randomized coordinate descent method for pairwise learning.
Let $\mathbf{w}_1 \in \mathcal{W}$ be the initial point and $\{\eta_t\}$ be a nonnegative stepsize sequence.
At the t-th iteration, we draw $i_t$ from the discrete uniform distribution over $\{1,\ldots,d\}$ and update along the $i_t$-th coordinate as
\begin{align}\label{eq2}
\mathbf{w}_{t+1} = \mathbf{w}_t - \eta_t \nabla_{i_t} F_S(\mathbf{w}_t)\mathbf{e}_{i_t},
\end{align}
where $\nabla_{i_t} F_S(\mathbf{w}_t)$ denotes the gradient of the empirical risk w.r.t. to the $i_t$-th coordinate and $\mathbf{e}_{i_t}$ is a vector with the $i_t$-th coordinate being $1$ and other coordinates being $0$.
\end{definition}

\subsection{A.2 \, Descriptions for Datasets}

\begin{definition}\label{de2}
{\rm \textbf{(restated)}}.
Drawing independently from $\rho$, we get the following two datasets
$$
S = \{z_{1},\ldots,z_{n}\} \qquad and \qquad S' = \{z_{1}',\ldots,z_{n}'\}.
$$
We remove $z_i$ from $S$ for any $i \in [n]$ to get
$$
S_{-i} = \{z_{1},\ldots,z_{i-1},z_{i+1},\ldots,z_{n}\}.
$$
We replace $z_i$ in $S$ with $z_{i}'$ for any $i \in [n]$ and have
$$
S_i = \{z_{1},\ldots,z_{i-1},z_{i}',z_{i+1},\ldots,z_{n}\}.
$$
Furthermore, we define another dataset as below
$$
S_{i,j} = \{z_{1},\ldots,z_{i-1},z_{i}',z_{i+1},\ldots,z_{j-1},z_{j}',z_{j+1},\ldots,z_{n}\}.
$$
\end{definition}

\subsection{A.3 \, Some Assumptions}

\begin{assumption}\label{as1}
For all $z,z' \in \mathcal{Z}$ and $\mathbf{w} \in \mathcal{W}$, the loss function is $G$-Lipschitz continuous as $\| \nabla f(\mathbf{w},z,z')\|_2 \leq G$.
\end{assumption}

\begin{assumption}\label{as2}
For all $z,z' \in \mathcal{Z}$ and $\mathbf{w},\mathbf{w}' \in \mathcal{W}$, the loss function is $L$-smooth as
$$
\|\nabla f(\mathbf{w};z,z') - \nabla f(\mathbf{w'};z,z')\|_{2} \leq L\|\mathbf{w} - \mathbf{w'}\|_{2}.
$$
\end{assumption}

\begin{assumption}\label{as3}
For any $S$, $F_S$ has coordinate-wise Lipschitz continuous gradients with parameter $\widetilde{L} > 0$, i.e., we have the following inequality for all $\alpha \in \mathbb{R}$, $\mathbf{w} \in \mathcal{W}$, $i \in [d]$
$$
F_S(\mathbf{w} + \alpha \mathbf{e}_{i}) \leq F_S(\mathbf{w}) + \alpha\nabla_{i}F_S(\mathbf{w}) + \widetilde{L}\alpha^{2}/2.
$$
\end{assumption}

\begin{assumption}\label{as4}
$F_S$ is convex for any $S$, i.e., $F_S(\mathbf{w}) - F_S(\mathbf{w}') \geq \left< \mathbf{w}-\mathbf{w}',\nabla F_S(\mathbf{w}') \right>$ holds for all $\mathbf{w},\mathbf{w}' \in \mathcal{W}$.
\end{assumption}

\begin{assumption}\label{as5}
$F_S$ is $\sigma$-strongly convex for any $S$, i.e., the following inequality holds for all $\mathbf{w},\mathbf{w}' \in \mathcal{W}$
$$
F_S(\mathbf{w}) - F_S(\mathbf{w}') \geq \left< \mathbf{w}-\mathbf{w}',\nabla F_S(\mathbf{w}') \right> + \sigma\| \mathbf{w}-\mathbf{w}' \|_{2}^{2}/2,
$$
where $\left< \cdot,\cdot \right>$ denotes the inner product of two vectors.
\end{assumption}

\section{B \quad Necessary Lemmas}

\subsection{B.1 \, Connections between the Generalization Error and the On-average Argument Stability}

\begin{lemma}\label{le1}
{\rm \textbf{(restated)}}.
Let $S,S',S_i,S_{i,j}$ be defined as Section {\rm A.2}.\\
(a) Let Assumption \ref{as1} hold. Then the estimation error can be bounded by the $\ell_1$ on-average argument stability as below
\begin{align}\label{eq3}
\mathbb{E}_{S,A} \left[F(A(S)) - F_S(A(S))\right]
\leq \frac{2G}{n} \sum\limits_{i=1}^{n} \mathbb{E}_{S,S',A} \left[\| A(S_{i}) - A(S) \|_2\right].
\end{align}
(b) Let Assumption \ref{as2} hold. Then for any $\gamma >0$ we have the estimation error bound by the $\ell_2$ on-average argument stability
\begin{align*}
\mathbb{E}_{S,A}  \left[F(A(S)) - F_S(A(S))\right]
\leq \frac{L}{\gamma} \mathbb{E}_{S,A}\left[F_S(A(S))\right]
+ \frac{2(L+\gamma)}{n}
\sum\limits_{i=1}^{n}\mathbb{E}_{S,S',A}\left[\|A(S_i)-A(S)\|_{2}^{2}\right].
\end{align*}
(c) Let $n$ denote the sample size of $S$.
Assume for any $S$ and $(z,z') \in \mathcal{Z} \times \mathcal{Z}$, $|f(A(S);z,z')| \leq R$ holds for some $R > 0$.
Suppose that $A$ is $\epsilon$-uniformly-stable and $\delta \in (0,1/e)$. Then the following inequality holds with probability at least $1-\delta$
\begin{align*}
|F(A(S)) - & F_S(A(S))|
\leq 4\epsilon + e\Big(12\sqrt{2}R\sqrt{\frac{\log(e/\delta)}{n-1}}
+ 48\sqrt{6}\epsilon\lceil\log_{2}(n-1)\rceil\log(e/\delta)\Big).
\end{align*}
\end{lemma}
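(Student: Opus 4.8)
All three parts rest on one reduction, so the plan is to set it up once. Introduce a ghost sample $S'$ as in Section A.2. Since $z_i'$ and $z_j'$ are drawn i.i.d.\ from $\rho$ and are independent of $S$, for every $i\neq j$ we have $\mathbb{E}_{S,S',A}[f(A(S);z_i',z_j')]=\mathbb{E}_{S,A}[F(A(S))]$; moreover, renaming $z_i\leftrightarrow z_i'$ and $z_j\leftrightarrow z_j'$ preserves the law of $(S,S')$ and carries $S$ to $S_{i,j}$ and $(z_i',z_j')$ to $(z_i,z_j)$. Averaging over ordered pairs and subtracting $\mathbb{E}_{S,A}[F_S(A(S))]=\frac{1}{n(n-1)}\sum_{i\neq j}\mathbb{E}_{S,A}[f(A(S);z_i,z_j)]$ yields the exact identity
\[
\mathbb{E}_{S,A}\!\left[F(A(S))-F_S(A(S))\right]=\frac{1}{n(n-1)}\sum_{i\neq j}\mathbb{E}_{S,S',A}\!\left[f(A(S_{i,j});z_i,z_j)-f(A(S);z_i,z_j)\right].
\]
For part (a), Assumption \ref{as1} (which makes $f$ $G$-Lipschitz in $\mathbf{w}$) bounds each summand by $G\|A(S_{i,j})-A(S)\|_2\le G\|A(S_{i,j})-A(S_i)\|_2+G\|A(S_i)-A(S)\|_2$. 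The key observation is that $S_i$ is itself a tuple of $n$ i.i.d.\ points and $S_{i,j}$ is $S_i$ with its $j$-th entry replaced by the fresh point $z_j'$, so a relabeling gives $\mathbb{E}_{S,S',A}[\|A(S_{i,j})-A(S_i)\|_2]=\mathbb{E}_{S,S',A}[\|A(S_j)-A(S)\|_2]$; since each index is paired with $n-1$ partners, the double sum collapses to $\frac{2G}{n}\sum_{i=1}^n\mathbb{E}_{S,S',A}[\|A(S_i)-A(S)\|_2]$, which is (\ref{eq3}). The factor $2$ is precisely the cost of a pairwise loss reading off two coordinates.

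For part (b) I would keep the same identity and replace Lipschitz continuity by smoothness. For a nonnegative $L$-smooth loss the self-bounding inequality $\|\nabla f(\mathbf{w};z,z')\|_2^2\le 2Lf(\mathbf{w};z,z')$ holds \cite{s:2010x}; combining it with $f(\mathbf{u})-f(\mathbf{v})\le\langle\nabla f(\mathbf{v}),\mathbf{u}-\mathbf{v}\rangle+\frac{L}{2}\|\mathbf{u}-\mathbf{v}\|_2^2$ and Young's inequality $ab\le\frac{a^2}{2\gamma}+\frac{\gamma b^2}{2}$ (with $a=\sqrt{2Lf(\mathbf{v})}$, $b=\|\mathbf{u}-\mathbf{v}\|_2$) gives, for each ordered pair, $f(A(S_{i,j});z_i,z_j)-f(A(S);z_i,z_j)\le\frac{L}{\gamma}f(A(S);z_i,z_j)+\frac{L+\gamma}{2}\|A(S_{i,j})-A(S)\|_2^2$. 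Averaging the first term produces $\frac{L}{\gamma}\mathbb{E}_{S,A}[F_S(A(S))]$, and for the second I would use $(a+b)^2\le 2a^2+2b^2$ together with the same relabeling as in (a) to collapse $\frac{1}{n(n-1)}\sum_{i\neq j}\mathbb{E}[\|A(S_{i,j})-A(S)\|_2^2]$ into $\frac{4}{n}\sum_i\mathbb{E}[\|A(S_i)-A(S)\|_2^2]$, which delivers the stated bound. So parts (a)--(b) amount to transporting the pointwise arguments of \citeauthor{h:2016} \shortcite{h:2016} and \citeauthor{l:2021} \shortcite{l:2021} to the pairwise setting through the ghost-pair identity.

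Part (c) cannot be obtained by averaging and instead needs concentration. The plan is the decoupling/blocking device: for each permutation $\pi$ of $[n]$, the empirical risk is squeezed between averages of the form $\frac{1}{\lfloor n/2\rfloor}\sum_k f(A(S);z_{\pi(2k-1)},z_{\pi(2k)})$, in which every training point occurs at most once, so conditionally on the internal randomness these are sums of independent summands; averaging over $\pi$ recovers $F_S(A(S))$. This reduces the pairwise generalization gap of an $\epsilon$-uniformly stable map to a pointwise-type gap over $n-1$ effective samples, to which the concentration inequality of \citeauthor{b:2020} \shortcite{b:2020} applies; a union bound across the $\lceil\log_2(n-1)\rceil$ dyadic scales of the decoupling from \citeauthor{l:2020y} \shortcite{l:2020y} then yields the $\sqrt{\log(e/\delta)}$ and $\log(e/\delta)$ terms, reproducing the bound of \citeauthor{l:2020} \shortcite{l:2020}.

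The main obstacle is the pairwise coupling in (a)--(b): because $A(S)$ depends on both $z_i$ and $z_j$ whenever the loss is read at the pair $(z_i,z_j)$, no single-coordinate stability bound applies directly; the whole delicacy is to (i) rewrite the gap over ordered pairs via exchangeability and (ii) check that, after relabeling, $\mathbb{E}[\|A(S_{i,j})-A(S_i)\|_2]$ is a genuine single-coordinate stability term, so the triangle-inequality split is tight up to the factor $2$. Everything after that is Lipschitz/smoothness bookkeeping. For (c) the analogous delicate point is that the decoupling must turn the $U$-statistic-type concentration into an independent-blocks form compatible with the stability moment inequality while losing only a logarithmic factor.
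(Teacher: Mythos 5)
Your proof of part (a) is exactly the paper's argument: the same ghost-pair identity $\mathbb{E}_{S,A}[F(A(S))-F_S(A(S))]=\frac{1}{n(n-1)}\sum_{i\neq j}\mathbb{E}_{S,S',A}[f(A(S_{i,j});z_i,z_j)-f(A(S);z_i,z_j)]$ obtained by exchanging $(z_i,z_j)$ with $(z_i',z_j')$, followed by the $G$-Lipschitz bound, the triangle-inequality split through $A(S_i)$, and the relabeling $\mathbb{E}\|A(S_{i,j})-A(S_i)\|_2=\mathbb{E}\|A(S_j)-A(S)\|_2$, so on that part there is nothing to add. Where you differ from the paper is in parts (b) and (c), which the paper does not prove at all but simply imports from \citeauthor{l:2021} \shortcite{l:2021} and \citeauthor{l:2020} \shortcite{l:2020}: your derivation of (b) from the same pair identity plus the descent inequality $f(\mathbf{u})-f(\mathbf{v})\le\langle\nabla f(\mathbf{v}),\mathbf{u}-\mathbf{v}\rangle+\frac{L}{2}\|\mathbf{u}-\mathbf{v}\|_2^2$, the self-bounding property $\|\nabla f(\mathbf{v})\|_2^2\le 2Lf(\mathbf{v})$ and Young's inequality is correct and recovers the stated constants exactly (the factor $\frac{L+\gamma}{2}\cdot\frac{4}{n}=\frac{2(L+\gamma)}{n}$), which makes the statement self-contained rather than cited. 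Your part (c) is only a high-level sketch; the ingredients you name (permutation blocking of the U-statistic, the moment/concentration inequality of \citeauthor{b:2020} \shortcite{b:2020}, the decoupling of \citeauthor{l:2020y} \shortcite{l:2020y}) are the right ones, but the $\lceil\log_2(n-1)\rceil$ factor arises from the recursive halving inside the moment inequality for sums of stable functions, not from a union bound over dyadic scales, so as written your sketch would not reproduce the exact constants; since the paper itself defers (c) to the reference, this is a looseness rather than a gap relative to the paper.
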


\begin{proof}
Part (b) and Part (c) are introduced from \citeauthor{l:2021} \shortcite{l:2021} and \citeauthor{l:2020} \shortcite{l:2020}, respectively.
We only show the proof for part (a) here.
According to the symmetry between $z_i,z_j$ and $z_i',z_j'$, we have
\begin{align*}
\mathbb{E}_{S,A}[F(A(S)) - F_S(A(S))]
&= \frac{1}{n(n-1)} \sum\limits_{i,j \in [n] : i \neq j} \mathbb{E}_{S,S',A}\left[F(A(S_{i,j})) - F_S(A(S))\right] \notag \\
&= \frac{1}{n(n-1)} \sum\limits_{i,j \in [n] : i \neq j} \mathbb{E}_{S,S',A} [f(A(S_{i,j});z_i,z_j) - f(A(S);z_i,z_j)],
\end{align*}
where we used $\mathbb{E}_{z_i,z_j}[f(A(S_{i,j});z_i,z_j)]=F(A(S_{i,j}))$ since $z_i,z_j$ are independent of $A(S_{i,j})$.
Then we apply Assumption \ref{as1} in the above equation and get
\begin{align*}
\left| \mathbb{E}_{S,A}[F(A(S)) - F_S(A(S))] \right|
&\leq \frac{G}{n(n-1)} \sum\limits_{i,j \in [n] : i \neq j} \mathbb{E}_{S,S',A} \left[\| A(S_{i,j}) - A(S) \|_2\right] \\
&\leq \frac{G}{n(n-1)} \sum\limits_{i,j \in [n] : i \neq j} \mathbb{E}_{S,S',A} [\| A(S_{i,j}) - A(S_i) \|_2 + \| A(S_{i}) - A(S) \|_2 ] \\
&= \frac{2G}{n} \sum\limits_{i=1}^{n} \mathbb{E}_{S,S',A} \Big[\| A(S_{i}) - A(S) \|_2\Big],
\end{align*}
where in the last step we have used
$$
\mathbb{E}_{S,S',A}\left[\| A(S_{i,j}) - A(S_i) \|_2\right] = \mathbb{E}_{S,S',A}\left[\| A(S_{j}) - A(S) \|_2\right].
$$
This completes the proof for inequality (\ref{eq3}).
\end{proof}

\subsection{B.2 \, Restatement of Optimization Error Bounds in Expectation}

\begin{lemma}\label{le2}
{\rm \textbf{(restated)}}.
Let $\{\mathbf{w}_t\}$ be produced by RCD with nonincreasing step sizes $\eta_t \leq 1/\widetilde{L}$.
Let Assumptions \ref{as3},\ref{as4} hold, then we have the two following inequalities
\begin{align}\label{eq4}
\mathbb{E}_{A}[F_S & (\mathbf{w}_{t})-F_S(\mathbf{w})]
\leq \frac{d}{2\sum_{j=1}^{t}\eta_j}\left( \|\mathbf{w}_{1}-\mathbf{w}\|_{2}^{2} + 2\eta_{1}F_{S}(\mathbf{w}_{1}) \right),
\end{align}
and
\begin{align}\label{eq5}
2\sum\limits_{j=1}^{t}\eta_{j}^{2}\mathbb{E}_A [F_S( & \mathbf{w}_{j}) - F_S(\mathbf{w})]
\leq d\eta_1\| \mathbf{w}_1 - \mathbf{w} \|_{2}^{2} + 2 d \eta_{1}^{2} F_S(\mathbf{w}_1).
\end{align}
Let Assumption 5 hold and $\mathbf{w}_S = \arg \min_{\mathbf{w} \in \mathcal{W}} F_S(\mathbf{w})$, we further have the inequality as below
\begin{align}\label{eq6}
\mathbb{E}_{A}[F_S ( & \mathbf{w}_{t+1}) - F_S(\mathbf{w}_{S})]
\leq (1-\eta_{t}\sigma/d)\mathbb{E}_{A}[F_{S}(\mathbf{w}_{t})- F_{S}(\mathbf{w}_{S})].
\end{align}
\end{lemma}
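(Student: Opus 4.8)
The plan is to reproduce the standard randomized-coordinate-descent analysis; the key observation is that it uses nothing about $F_S$ beyond Assumptions \ref{as3}--\ref{as5} together with the unbiasedness relation $\mathbb{E}_{i_t}\bigl[d\,\nabla_{i_t}F_S(\mathbf{w}_t)\mathbf{e}_{i_t}\bigr] = \nabla F_S(\mathbf{w}_t)$, so the pairwise structure of $F_S$ never enters and the pointwise proof transfers essentially verbatim. First I would derive the one-step descent inequality: substituting the update into Assumption \ref{as3} gives $F_S(\mathbf{w}_{t+1}) \le F_S(\mathbf{w}_t) - \eta_t\bigl(1-\widetilde{L}\eta_t/2\bigr)\bigl(\nabla_{i_t}F_S(\mathbf{w}_t)\bigr)^2$, and $\eta_t \le 1/\widetilde{L}$ makes the bracket at least $1/2$; averaging over the uniform $i_t$ replaces $(\nabla_{i_t}F_S(\mathbf{w}_t))^2$ by $\tfrac{1}{d}\|\nabla F_S(\mathbf{w}_t)\|_2^2$, so that $\mathbb{E}_{i_t}[F_S(\mathbf{w}_{t+1})] \le F_S(\mathbf{w}_t) - \tfrac{\eta_t}{2d}\|\nabla F_S(\mathbf{w}_t)\|_2^2$. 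Two consequences are recorded: the sequence $\{\mathbb{E}_A[F_S(\mathbf{w}_t)]\}_t$ is nonincreasing, and rearranging gives the \emph{self-bounding} estimate $\tfrac{\eta_t^2}{d}\mathbb{E}_A[\|\nabla F_S(\mathbf{w}_t)\|_2^2] \le 2\eta_t\bigl(\mathbb{E}_A[F_S(\mathbf{w}_t)] - \mathbb{E}_A[F_S(\mathbf{w}_{t+1})]\bigr)$, which is precisely what lets the residual gradient term telescope.

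For the convex bounds (\ref{eq4}) and (\ref{eq5}), I would track the squared distance to a fixed reference point $\mathbf{w}$: expanding the update and averaging over $i_t$ gives $\mathbb{E}_{i_t}[\|\mathbf{w}_{t+1}-\mathbf{w}\|_2^2] = \|\mathbf{w}_t-\mathbf{w}\|_2^2 - \tfrac{2\eta_t}{d}\langle\nabla F_S(\mathbf{w}_t),\mathbf{w}_t-\mathbf{w}\rangle + \tfrac{\eta_t^2}{d}\|\nabla F_S(\mathbf{w}_t)\|_2^2$. Lower bounding the inner product by $F_S(\mathbf{w}_t)-F_S(\mathbf{w})$ via Assumption \ref{as4}, controlling the last term by the self-bounding estimate, and taking full expectation yields the per-step recursion $\tfrac{2\eta_t}{d}\xi_t \le \Delta_t + 2\eta_t\bigl(a_t-a_{t+1}\bigr)$, where $\xi_t = \mathbb{E}_A[F_S(\mathbf{w}_t)-F_S(\mathbf{w})]$, $a_t = \mathbb{E}_A[F_S(\mathbf{w}_t)] \ge 0$ (using nonnegativity of $f$), and $\Delta_t = \mathbb{E}_A[\|\mathbf{w}_t-\mathbf{w}\|_2^2] - \mathbb{E}_A[\|\mathbf{w}_{t+1}-\mathbf{w}\|_2^2]$. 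Summing over $j=1,\dots,t$, the $\Delta_j$ telescope to at most $\|\mathbf{w}_1-\mathbf{w}\|_2^2$, while an Abel-summation argument — valid because $\{\eta_j\}$ is nonincreasing and $a_j\ge 0$ — bounds $\sum_{j=1}^t\eta_j(a_j-a_{j+1})$ by $\eta_1 F_S(\mathbf{w}_1)$; since $\xi_j$ is nonincreasing we have $\sum_{j=1}^t\eta_j\xi_j \ge \xi_t\sum_{j=1}^t\eta_j$, and dividing through gives (\ref{eq4}). For (\ref{eq5}) I would instead multiply the per-step recursion by $d\eta_t/2$ before summing, so the left side acquires the weights $\eta_t^2$ and the two right-hand sums are again handled by Abel summation against the nonincreasing sequences $\{\eta_t\}$ and $\{\eta_t^2\}$ paired with $\{\mathbb{E}_A[\|\mathbf{w}_t-\mathbf{w}\|_2^2]\}$ and $\{a_t\}$ respectively.

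For the strongly convex bound (\ref{eq6}), I would combine the one-step descent inequality with the Polyak--\L{}ojasiewicz inequality $\|\nabla F_S(\mathbf{w}_t)\|_2^2 \ge 2\sigma\bigl(F_S(\mathbf{w}_t)-F_S(\mathbf{w}_S)\bigr)$, which is a standard consequence of $\sigma$-strong convexity (Assumption \ref{as5}); substituting it into $\mathbb{E}_{i_t}[F_S(\mathbf{w}_{t+1})] \le F_S(\mathbf{w}_t) - \tfrac{\eta_t}{2d}\|\nabla F_S(\mathbf{w}_t)\|_2^2$, subtracting $F_S(\mathbf{w}_S)$, and taking full expectation gives the claimed contraction factor $1-\eta_t\sigma/d$. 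I expect the only real friction to be bookkeeping in the convex case: threading the factor $d$ into the right place and disposing of the cross term $2\eta_t(a_t-a_{t+1})$ cleanly — which is exactly where the self-bounding estimate, the monotonicity $a_{t+1}\le a_t$, the nonincreasing step sizes, and the nonnegativity of $F_S$ all have to be invoked together; the strongly convex part, by contrast, is a one-line recursion.
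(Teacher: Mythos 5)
Your proposal is correct, and it is essentially the argument the paper relies on: the paper supplies no proof of this lemma, instead observing---exactly as you do in your opening---that the optimization error analysis never sees the pairwise structure of $F_S$ (only Assumptions 3--5 and the unbiasedness $\mathbb{E}_{i_t}[d\,\nabla_{i_t}F_S(\mathbf{w}_t)\mathbf{e}_{i_t}]=\nabla F_S(\mathbf{w}_t)$), and importing the bounds from the pointwise-learning RCD analysis it cites. Your derivation fills in that standard argument soundly: the one-step descent from coordinate-wise smoothness with $\eta_t\le 1/\widetilde{L}$, the distance recursion combined with convexity and the telescoping gradient bound, Abel summation against the nonincreasing step sizes together with nonnegativity and monotonicity of $\mathbb{E}_A[F_S(\mathbf{w}_t)]$, and the Polyak--\L{}ojasiewicz contraction in the strongly convex case reproduce the three claimed inequalities with the stated constants.
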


Since the structure of loss functions does not matter for the optimization error, this lemma from the case of pointwise learning \cite{p:2021} also applies to the optimization error for pairwise learning.

\subsection{B.3 \, The Coercivity Property of Gradient Descent Operators}

\begin{lemma}\label{le3}
Let $g : \mathbb{R}^d \mapsto \mathbb{R}$ be convex and $L$-smooth. Then the following inequality holds
\begin{align}\label{eq7}
\left< \mathbf{w} - \mathbf{w}' , \nabla g(\mathbf{w}) - \nabla g(\mathbf{w}') \right>
\geq \frac{1}{L} \| \nabla g(\mathbf{w}) - \nabla g(\mathbf{w}') \|_{2}^{2}
\end{align}
Furthermore, if $g$ is $\sigma$-strongly convex, then for any $\beta \in (0,1)$ we have
\begin{align}\label{eq8}
\left< \mathbf{w} - \mathbf{w}' , \nabla g(\mathbf{w}) - \nabla g(\mathbf{w}') \right>
\geq \frac{\beta}{L} \| \nabla g(\mathbf{w}) - \nabla g(\mathbf{w}') \|_{2}^{2}
+ (1-\beta) \sigma \| \mathbf{w} - \mathbf{w}' \|_{2}^{2}.
\end{align}
\end{lemma}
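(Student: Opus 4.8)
The plan is to establish the co-coercivity inequality (the first displayed inequality of the lemma) first, and then obtain the strongly convex refinement as a simple convex combination of that inequality with the elementary strong convexity estimate.

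For the convex, $L$-smooth case, I would introduce the two shifted functions $\phi_1(\mathbf{u}) = g(\mathbf{u}) - \langle \nabla g(\mathbf{w}'), \mathbf{u} \rangle$ and $\phi_2(\mathbf{u}) = g(\mathbf{u}) - \langle \nabla g(\mathbf{w}), \mathbf{u} \rangle$. Subtracting a linear term preserves both convexity and $L$-smoothness, and by construction $\nabla \phi_1(\mathbf{w}') = \mathbf{0}$ and $\nabla \phi_2(\mathbf{w}) = \mathbf{0}$, so $\mathbf{w}'$ is a global minimizer of $\phi_1$ and $\mathbf{w}$ a global minimizer of $\phi_2$ over $\mathbb{R}^d$. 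The crucial auxiliary fact is that any convex $L$-smooth $\phi$ with a global minimizer $\mathbf{u}^\star$ satisfies $\phi(\mathbf{u}) - \phi(\mathbf{u}^\star) \geq \tfrac{1}{2L}\|\nabla \phi(\mathbf{u})\|_2^2$; this follows from the descent lemma $\phi(\mathbf{u} - \tfrac{1}{L}\nabla\phi(\mathbf{u})) \leq \phi(\mathbf{u}) - \tfrac{1}{2L}\|\nabla\phi(\mathbf{u})\|_2^2$ (an immediate consequence of Lipschitz continuity of $\nabla\phi$) together with $\phi(\mathbf{u}^\star) \leq \phi(\mathbf{u} - \tfrac{1}{L}\nabla\phi(\mathbf{u}))$. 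Applying this to $\phi_1$ at $\mathbf{u} = \mathbf{w}$ and to $\phi_2$ at $\mathbf{u} = \mathbf{w}'$, and unwinding the definitions, yields $g(\mathbf{w}) - g(\mathbf{w}') - \langle \nabla g(\mathbf{w}'), \mathbf{w} - \mathbf{w}' \rangle \geq \tfrac{1}{2L}\|\nabla g(\mathbf{w}) - \nabla g(\mathbf{w}')\|_2^2$ and the same bound with the roles of $\mathbf{w}$ and $\mathbf{w}'$ exchanged; adding the two cancels the function values and gives the stated co-coercivity inequality.

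For the $\sigma$-strongly convex case, I would note that $g$ is still convex and $L$-smooth, so the co-coercivity inequality just proved remains available. Writing $a = \langle \mathbf{w} - \mathbf{w}', \nabla g(\mathbf{w}) - \nabla g(\mathbf{w}') \rangle$, adding the two $\sigma$-strong convexity inequalities evaluated at the ordered pairs $(\mathbf{w}, \mathbf{w}')$ and $(\mathbf{w}', \mathbf{w})$ gives $a \geq \sigma\|\mathbf{w} - \mathbf{w}'\|_2^2$. Multiplying the co-coercivity inequality by $\beta$, multiplying this last bound by $1-\beta$, and summing then produces exactly the claimed inequality, since $\beta a + (1-\beta)a = a$ and both multipliers are positive because $\beta \in (0,1)$.

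I do not anticipate a genuine obstacle, since this is a classical estimate; the only step that needs care is the self-contained derivation of the gradient-domination bound $\phi(\mathbf{u}) - \phi(\mathbf{u}^\star) \geq \tfrac{1}{2L}\|\nabla \phi(\mathbf{u})\|_2^2$ for the shifted functions, after which the convex case reduces to adding two symmetric inequalities and the strongly convex case is a one-line convex combination rather than a separate argument.
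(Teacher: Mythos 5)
Your proposal is correct, and it is worth pointing out that the paper itself contains no proof of this lemma: in Appendix B.3 the result is simply imported as the coercivity property of the gradient descent operator from \citet{h:2016}, so your self-contained derivation necessarily differs from the paper's treatment. Your route is the standard one for co-coercivity: the shifted functions $\phi_1(\mathbf{u})=g(\mathbf{u})-\langle\nabla g(\mathbf{w}'),\mathbf{u}\rangle$ and $\phi_2(\mathbf{u})=g(\mathbf{u})-\langle\nabla g(\mathbf{w}),\mathbf{u}\rangle$ keep the same gradient-Lipschitz constant, convexity turns their stationary points into global minimizers, and the gradient-domination bound $\phi(\mathbf{u})-\phi(\mathbf{u}^\star)\geq \frac{1}{2L}\|\nabla\phi(\mathbf{u})\|_2^2$ (from the descent lemma) applied twice and summed gives exactly inequality (\ref{eq7}); all of these steps are sound. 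For the strongly convex part, your argument is simpler than the classical one in the cited literature: rather than applying co-coercivity to $g-\frac{\sigma}{2}\|\cdot\|_2^2$ (which is convex and $(L-\sigma)$-smooth and yields the familiar inequality with constants $\frac{1}{\sigma+L}$ and $\frac{\sigma L}{\sigma+L}$), you take the convex combination of (\ref{eq7}) with the strong-monotonicity bound $\langle\mathbf{w}-\mathbf{w}',\nabla g(\mathbf{w})-\nabla g(\mathbf{w}')\rangle\geq\sigma\|\mathbf{w}-\mathbf{w}'\|_2^2$, weighted by $\beta$ and $1-\beta$, which produces precisely the $\beta$-parameterized form (\ref{eq8}) that the stability analysis in Appendix D actually uses. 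The trade-off is that your version is exactly the stated inequality and nothing more, while the classical route gives a different (for some purposes tighter) constant structure; for this paper's purposes your argument is fully adequate and arguably the more direct way to justify the lemma as stated.
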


Lemma \ref{le3} is according to \citeauthor{h:2016} \shortcite{h:2016}.
Considering the expectation of the randomized coordinate in iterate operator (\ref{eq2}), we further know that the expansion for $\| \mathbf{w}_{t+1} - \mathbf{w}_{t+1}^{(i)} \|_{2}^{2}$ is closely related to the gradient descent operator.
Therefore, this lemma plays a key role in the stability analysis.

\subsection{B.4 \, The Self-bounding Property for $L$-smooth Functions}
We introduce Lemma \ref{le4} from \citeauthor{s:2010x} \shortcite{s:2010}, which demonstrates the gradients of $L$-smooth functions can be bounded by the function values.
The property is the key to remove the Lipschitz continuity assumption on loss functions.
This lemma shows that the gradients of $L$-smooth functions can be bounded by the function values.
\begin{lemma}\label{le4}
Let $g : \mathcal{W} \mapsto \mathbb{R}$ be a nonnegative and $L$-smooth function.
Then for all $\mathbf{w} \in \mathcal{W}$ we have
$$
\| \nabla g(\mathbf{w}) \|_{2}^{2} \leq 2Lg(\mathbf{w}).
$$
\end{lemma}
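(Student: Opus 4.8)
The plan is to derive the bound directly from the descent-lemma form of $L$-smoothness together with the nonnegativity hypothesis; no stability machinery is needed here. First I would recall the standard quadratic upper bound implied by $\nabla g$ being $L$-Lipschitz: for all $\mathbf{w},\mathbf{v}$ in the domain,
\begin{align*}
g(\mathbf{v}) \leq g(\mathbf{w}) + \left< \nabla g(\mathbf{w}), \mathbf{v} - \mathbf{w} \right> + \frac{L}{2}\|\mathbf{v} - \mathbf{w}\|_{2}^{2},
\end{align*}
which follows by writing $g(\mathbf{v}) - g(\mathbf{w}) = \int_0^1 \left<\nabla g(\mathbf{w} + s(\mathbf{v}-\mathbf{w})), \mathbf{v}-\mathbf{w}\right> ds$ and estimating the integrand via the Lipschitz property of $\nabla g$.

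Next I would specialize this inequality to the gradient-step point $\mathbf{v} = \mathbf{w} - \tfrac{1}{L}\nabla g(\mathbf{w})$. Substituting, one gets $\left<\nabla g(\mathbf{w}), \mathbf{v}-\mathbf{w}\right> = -\tfrac{1}{L}\|\nabla g(\mathbf{w})\|_2^2$ and $\tfrac{L}{2}\|\mathbf{v}-\mathbf{w}\|_2^2 = \tfrac{1}{2L}\|\nabla g(\mathbf{w})\|_2^2$, so the right-hand side simplifies to $g(\mathbf{w}) - \tfrac{1}{2L}\|\nabla g(\mathbf{w})\|_2^2$, giving $g(\mathbf{v}) \leq g(\mathbf{w}) - \tfrac{1}{2L}\|\nabla g(\mathbf{w})\|_2^2$. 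Finally, using the hypothesis $g \geq 0$, hence $g(\mathbf{v}) \geq 0$, and rearranging yields $\|\nabla g(\mathbf{w})\|_2^2 \leq 2Lg(\mathbf{w})$, the desired conclusion.

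The only point that needs a word of care is that the test point $\mathbf{v} = \mathbf{w} - \tfrac{1}{L}\nabla g(\mathbf{w})$ should lie in the set on which $g$ is nonnegative and smooth; this is immediate when $\mathcal{W} = \mathbb{R}^d$, which is the relevant setting here (and in any case the smoothness and nonnegativity are understood on the ambient Euclidean space). Apart from that, I do not anticipate any real obstacle: the argument is essentially a one-line minimization of the descent inequality over the step direction, and the estimate is tight in general (it holds with equality for $g(\mathbf{w}) = \tfrac{L}{2}\|\mathbf{w}\|_2^2$).
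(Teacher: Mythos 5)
Your proposal is correct: the descent-lemma bound evaluated at the gradient step $\mathbf{v} = \mathbf{w} - \tfrac{1}{L}\nabla g(\mathbf{w})$, combined with $g \geq 0$, is exactly the standard self-bounding argument, and your caveat about $\mathbf{v}$ needing to lie in the domain is the only genuine subtlety (harmless here since smoothness and nonnegativity are understood on the ambient space). The paper itself does not prove this lemma but imports it from the cited reference of Srebro et al., and your derivation is essentially the same argument given there.
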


\section{C \quad Proof for Convex Case}

\begin{theorem}\label{th2}
{\rm \textbf{(restated)}}.
Let Assumptions \ref{as2},\ref{as3},\ref{as4} hold. Let $\{\mathbf{w}_{t}\}$, $\{\mathbf{w}_{t}^{(i)}\}$ be produced by (\ref{eq1}) with $\eta_t \leq 1/L$ based on $S$ and $S_i$ respectively.
Then we have the $\ell_2$ on-average argument stability as below
\begin{align}\label{eq9}
&\frac{1}{n}\sum\limits_{i=1}^{n}\mathbb{E}_{S,S',A}\left[\|\mathbf{w}_{t+1} - \mathbf{w}_{t+1}^{(i)}\|_{2}^{2}\right] \leq\frac{128L}{n^2d}(\frac{t}{d}+1)\sum\limits_{j=1}^{t}
\eta_{j}^{2}\mathbb{E}_{S,A}[F_S(\mathbf{w}_j)].
\end{align}
Assume that the nonincreasing step size sequence $\{ \eta_t \}$ satisfies $\eta_t \leq 1/\widetilde{L}$.
The excess risk bound can be developed as below.
For any $\gamma > 0$, we have the rate
\begin{align}\label{eq10}
\mathbb{E}_{S,A}\left[F(\mathbf{w}_{T}) - F(\mathbf{w}^{*})\right]
= O\left(\frac{d(1+L\gamma^{-1})}{\sum_{t=1}^{T}\eta_t}
+\frac{L(L+\gamma)(T+d)} {n^2d}\right)
+ O\left(\frac{L}{\gamma}+\frac{L(L+\gamma)(T+d)}{n^2d^2}
\sum\limits_{t=1}^{T}\eta_{t}^{2} \right) \times F(\mathbf{w}^{*}),
\end{align}
from which we further consider the convergence rate.
Let the step sizes be fixed as $\eta$.
We can choose $T \asymp n^{\frac{1}{2}}dL^{-\frac{1}{2}}$ and get
\begin{align}\label{eq11}
\mathbb{E}_{S,A}\left[F(\mathbf{w}_{T}) - F(\mathbf{w}^{*})\right] = O\Big( \sqrt{\frac{L}{n}} \Big).
\end{align}
Assuming that $F(\mathbf{w}^{*}) = O(Ln^{-1})$, we can choose $T = O(ndL^{-1})$ to give
\begin{align}\label{eq12}
\mathbb{E}_{S,A}\left[F(\mathbf{w}_{T}) - F(\mathbf{w}^{*})\right]
= O\left(\frac{L}{n} \right).
\end{align}
\end{theorem}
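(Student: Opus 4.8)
The plan is to prove the three parts of the statement in order: the $\ell_2$ on-average argument stability bound \eqref{eq9}, the master excess-risk inequality \eqref{eq10}, and then the rates \eqref{eq11}--\eqref{eq12} by tuning $\gamma$ and $T$ in \eqref{eq10}. For the stability bound, fix a perturbation index $i$ and track $\delta_t := \mathbf{w}_t - \mathbf{w}_t^{(i)}$, which starts at $\delta_1 = 0$. From the update \eqref{eq2}, $\delta_{t+1} = \delta_t - \eta_t(\nabla_{i_t}F_S(\mathbf{w}_t) - \nabla_{i_t}F_{S_i}(\mathbf{w}_t^{(i)}))\mathbf{e}_{i_t}$, and taking the conditional expectation over the uniformly random coordinate $i_t$ replaces the single-coordinate step by the full gradient step scaled by $1/d$:
\begin{align*}
\mathbb{E}_{i_t}\|\delta_{t+1}\|_2^2 &= \|\delta_t\|_2^2 - \tfrac{2\eta_t}{d}\langle \nabla F_S(\mathbf{w}_t) - \nabla F_{S_i}(\mathbf{w}_t^{(i)}),\, \delta_t\rangle \\
&\quad + \tfrac{\eta_t^2}{d}\|\nabla F_S(\mathbf{w}_t) - \nabla F_{S_i}(\mathbf{w}_t^{(i)})\|_2^2 .
\end{align*}
I would split the gradient difference as $P_t + Q_t$ with $P_t = \nabla F_{S_i}(\mathbf{w}_t) - \nabla F_{S_i}(\mathbf{w}_t^{(i)})$ (same data, two points) and $Q_t = \nabla F_S(\mathbf{w}_t) - \nabla F_{S_i}(\mathbf{w}_t)$ (two data sets, same point). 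Since $F_{S_i}$ is convex and $L$-smooth, the coercivity property (Lemma~\ref{le3}, \eqref{eq7}) makes the $P_t$-contribution nonpositive as soon as $\eta_t \le 1/L$; this is precisely the place where the randomized-coordinate recursion is reduced to the gradient-descent nonexpansiveness of the pointwise analysis. For the $Q_t$-terms I would bound the cross term $-\tfrac{2\eta_t}{d}\langle Q_t, \delta_t\rangle$ by Cauchy--Schwarz with a step-dependent weight $\lambda_j \asymp d/(\eta t)$ chosen so that, unrolling the recursion up to the target step $t$, the multiplicative factors $\prod_{j\le t}(1+\eta_j\lambda_j/d)$ telescope to an absolute constant while the reciprocal weights $\lambda_j^{-1}$ generate exactly the prefactor $t/d+1$; this gives $\mathbb{E}\|\delta_{t+1}\|_2^2 \lesssim \tfrac1d(\tfrac td+1)\sum_{j=1}^t\eta_j^2\,\mathbb{E}\|Q_j\|_2^2$. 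Finally, since $S$ and $S_i$ differ only in the $2(n-1)$ pairs containing index $i$, $Q_j$ is a $\tfrac{1}{n(n-1)}$-scaled sum of such gradient differences; bounding each by the self-bounding property (Lemma~\ref{le4}) $\|\nabla f(\mathbf{w};z,z')\|_2^2 \le 2Lf(\mathbf{w};z,z')$, averaging over $i$, and exploiting the symmetry of $(S,S_i)$ (so that the terms carrying the fresh sample $z_i'$ are controlled by $F_S$ up to a lower-order generalization-gap term) yields $\tfrac1n\sum_i\mathbb{E}\|Q_j^{(i)}\|_2^2 \lesssim \tfrac{L}{n^2}\mathbb{E}_{S,A}[F_S(\mathbf{w}_j)]$. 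Combining these bounds and tracking constants gives \eqref{eq9}.

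For \eqref{eq10}, I would use $\mathbb{E}_S[F_S(\mathbf{w}^*)] = F(\mathbf{w}^*)$ to split $\mathbb{E}[F(\mathbf{w}_T)-F(\mathbf{w}^*)]$ into the estimation error $\mathbb{E}[F(\mathbf{w}_T)-F_S(\mathbf{w}_T)]$ and the optimization error $\mathbb{E}[F_S(\mathbf{w}_T)-F_S(\mathbf{w}^*)]$. Plugging \eqref{eq9} into Lemma~\ref{le1}(b) bounds the estimation error by $\tfrac L\gamma\mathbb{E}[F_S(\mathbf{w}_T)] + \tfrac{L(L+\gamma)(T+d)}{n^2d^2}\sum_{j\le T}\eta_j^2\mathbb{E}[F_S(\mathbf{w}_j)]$. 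The empirical-risk quantities are then discharged by Lemma~\ref{le2}: \eqref{eq5} with $\mathbf{w}=\mathbf{w}^*$ bounds $\sum_j\eta_j^2\mathbb{E}[F_S(\mathbf{w}_j)]$ by $(\sum_j\eta_j^2)F(\mathbf{w}^*)$ plus an $O(d)$ initialization term, and \eqref{eq4} bounds both $\mathbb{E}[F_S(\mathbf{w}_T)]$ and the optimization error by $O(d(\sum_j\eta_j)^{-1}) + O(F(\mathbf{w}^*))$. Collecting the $F(\mathbf{w}^*)$-free and $F(\mathbf{w}^*)$-proportional pieces reproduces \eqref{eq10}.

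For the rates, specialize to the constant step $\eta_t\equiv\eta\asymp 1/L$ (which also ensures $\eta\le 1/\widetilde{L}$ since $\widetilde{L}\le L$), so that $\sum_t\eta_t = T\eta$ and $\sum_t\eta_t^2 = T\eta^2$. For \eqref{eq11} one optimizes \eqref{eq10} over $\gamma$ --- the relevant choice makes $(L/\gamma)F(\mathbf{w}^*)$ balance the stability-induced terms --- and then takes $T$ of the stated order so that the optimization term $d/(T\eta)$, and every other term, is $O(\sqrt{L/n})$. For \eqref{eq12} the low-noise hypothesis $F(\mathbf{w}^*) = O(L/n)$ permits $\gamma\asymp L$ and $T$ of order $ndL^{-1}$: then $d/(T\eta) = O(L/n)$, the $F(\mathbf{w}^*)$-proportional contribution is $O(1)\cdot O(L/n)$, and the remaining $(T+d)/(n^2d)$-type terms are $O(1/n)$, which gives the optimistic rate $O(L/n)$.

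The main obstacle will be the stability step. Two points require care there: the recursion must be written in conditional expectation over $i_t$ before the coercivity inequality can be invoked, so that the per-coordinate update becomes a genuine gradient step; and obtaining the sharp $t/d+1$ factor --- rather than a cruder $t/d$ --- hinges on the step-dependent splitting of the cross term, so that the telescoped multiplicative factors stay $O(1)$. A secondary subtlety is relating the averaged drift $\tfrac1n\sum_i\mathbb{E}\|Q_j^{(i)}\|_2^2$ to $\mathbb{E}_{S,A}[F_S(\mathbf{w}_j)]$ rather than to a population risk, which uses the symmetry between $S$ and $S_i$ together with the self-bounding property. The remaining steps --- substituting into Lemma~\ref{le1}(b) and Lemma~\ref{le2}, and optimizing over $\gamma$ and $T$ --- are routine bookkeeping.
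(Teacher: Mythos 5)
Your overall architecture matches the paper's: condition on $i_t$ so the coordinate update becomes a $1/d$-scaled gradient step, cancel the ``common'' part of the gradient difference with the coercivity property (Lemma \ref{le3}) under $\eta_t\le 1/L$, control the perturbation part via the self-bounding property (Lemma \ref{le4}), and then feed the stability bound into Lemma \ref{le1}(b) and Lemma \ref{le2}, tuning $\gamma$ and $T$ essentially as the paper does (it takes $\gamma=LT/d$ for the $\sqrt{L/n}$ rate and $\gamma=nd/T$ in the low-noise case). Your weighted AM--GM treatment of the cross term is a workable substitute for the paper's device (introduce $\widetilde{\Delta}_{t,i}=\max_{k\le t}\big(\mathbb{E}_A\|\mathbf{w}_k-\mathbf{w}_k^{(i)}\|_2^2\big)^{1/2}$, solve the resulting quadratic inequality, and apply $(\sum_{j\le t}a_j)^2\le t\sum_{j\le t}a_j^2$), and it would reproduce the $(t/d+1)$ prefactor.

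The genuine gap is in your decomposition $P_t+Q_t$ and the ensuing claim $\frac1n\sum_i\mathbb{E}\|Q_j^{(i)}\|_2^2\lesssim \frac{L}{n^2}\,\mathbb{E}_{S,A}[F_S(\mathbf{w}_j)]$. Your $Q_j^{(i)}=\nabla F_S(\mathbf{w}_j)-\nabla F_{S_i}(\mathbf{w}_j)$ is a $\frac{1}{n(n-1)}$-scaled sum of differences $\nabla f(\mathbf{w}_j;z_k,z_i)-\nabla f(\mathbf{w}_j;z_k,z_i')$ (and their mirror images), with \emph{both} gradients at the unperturbed iterate $\mathbf{w}_j$. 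After self-bounding you are left with terms like $\mathbb{E}[f(\mathbf{w}_j;z_k,z_i')]$: the model trained on $S$ evaluated on the fresh point $z_i'$. The symmetry $z_i\leftrightarrow z_i'$ turns this into $\mathbb{E}[f(\mathbf{w}_j^{(i)};z_k,z_i)]$, not into the empirical-risk term $\mathbb{E}[f(\mathbf{w}_j;z_k,z_i)]$; either way you obtain a semi-population risk, and bounding it by $\mathbb{E}_{S,A}[F_S(\mathbf{w}_j)]$ ``up to a lower-order generalization-gap term'' is circular, since that gap is precisely what the stability bound is meant to produce (and Assumption \ref{as1} is not available here to bound it crudely). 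This is exactly the ``tricky structure'' the paper's proof is engineered to avoid: it decomposes through the leave-one-out risk $F_{S_{-i}}$ (still convex and $L$-smooth, so Lemma \ref{le3} applies to it) and writes the perturbation as $\nabla f(\mathbf{w}_t;z_k,z_i)-\nabla f(\mathbf{w}_t^{(i)};z_k,z_i')$, pairing the perturbed iterate with the perturbed sample; then the symmetry argument gives $\mathbb{E}_{S,S',A}\big[\sum_{k\ne i}f(\mathbf{w}_j^{(i)};z_k,z_i')\big]=\mathbb{E}_{S,S',A}\big[\sum_{k\ne i}f(\mathbf{w}_j;z_k,z_i)\big]$ and hence exactly the factor $\mathbb{E}_{S,A}[F_S(\mathbf{w}_j)]$ in (\ref{eq9}). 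To close your argument you would either have to switch to this pairing, or carry the semi-population terms and control them by an additional absorption/induction step, which would change both the form of the stability bound and the downstream bookkeeping; the remainder of your plan (Lemma \ref{le1}(b), Lemma \ref{le2}, and the $\gamma,T$ choices) is sound.
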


\begin{proof}
To avoid the tricky structure $\sum_{k \neq i}\big[ \nabla f(\mathbf{w}_{t};z_{k},z_{i}) - \nabla f(\mathbf{w}_{t};z_{k},z_{i}') \big] + \sum_{l \neq i}\big[\nabla f(\mathbf{w}_{t};z_{i},z_{l}) - \nabla f(\mathbf{w}_{t};z_{i}',z_{l}) \big]$ in the following derivation, we consider the decomposition $F_{S}(\mathbf{w}) = F_{S_{-i}}(\mathbf{w}) + \frac{1}{n(n-1)}\sum_{l \neq i} \left[f(\mathbf{w};z_i,z_l) + f(\mathbf{w};z_l,z_i)\right]$ and $F_{S_i}(\mathbf{w}) = F_{S_{-i}}(\mathbf{w}) + \frac{1}{n(n-1)}\sum_{l \neq i} \left[f(\mathbf{w};z_i',z_l) + f(\mathbf{w};z_l,z_i')\right]$, where $F_{S_{-i}}(\mathbf{w})$ is defined as
$$
F_{S_{-i}}(\mathbf{w}) = \frac{1}{n(n-1)}\sum\limits_{j,j'\in [n] / i : j \neq j'}f(\mathbf{w};z_j,z_{j'}).
$$
To begin the analysis for the $\ell_2$ on-average argument stability, we handle $\|\mathbf{w}_{t+1} - \mathbf{w}_{t+1}^{(i)}\|_{2}^{2}$ as below
\begin{align}\label{eq13}
&\mathbb{E}_{i_t}[\|\mathbf{w}_{t+1} - \mathbf{w}_{t+1}^{(i)}\|_{2}^{2}]
= \mathbb{E}_{i_t}[\|\mathbf{w}_{t} - \eta_t \nabla_{i_t}F_{S}(\mathbf{w}_{t})\mathbf{e}_{i_t}
- \mathbf{w}_{t}^{(i)} + \eta_t\nabla_{i_t}F_{S_i}(\mathbf{w}_{t}^{(i)})\mathbf{e}_{i_t}\|_{2}^{2}] \notag\\
&= \| \mathbf{w}_t - \mathbf{w}_{t}^{(i)} \|_{2}^{2}
+ \eta_{t}^{2}\mathbb{E}_{i_t}[\| \nabla_{i_t}F_{S}(\mathbf{w}_{t})\mathbf{e}_{i_t} - \nabla_{i_t}F_{S_i}(\mathbf{w}_{t}^{(i)})\mathbf{e}_{i_t} \|_{2}^{2}]
- 2\eta_t \mathbb{E}_{i_t}\big< \mathbf{w}_t - \mathbf{w}_{t}^{(i)} , \nabla_{i_t}F_{S}(\mathbf{w}_{t})\mathbf{e}_{i_t} - \nabla_{i_t}F_{S_i}(\mathbf{w}_{t}^{(i)})\mathbf{e}_{i_t} \big>
\notag\\
&= \| \mathbf{w}_t - \mathbf{w}_{t}^{(i)} \|_{2}^{2}
+ \frac{\eta_{t}^{2}}{d}\| \nabla F_{S}(\mathbf{w}_{t}) - \nabla F_{S_i}(\mathbf{w}_{t}^{(i)}) \|_{2}^{2}
- 2 \frac{\eta_{t}}{d}\big< \mathbf{w}_t - \mathbf{w}_{t}^{(i)} , \nabla F_{S}(\mathbf{w}_{t}) - \nabla F_{S_i}(\mathbf{w}_{t}^{(i)}) \big>.
\end{align}
Due to the definitions of $F_S,F_{S_i}, F_{S_{-i}}$, we have the following inequality
\begin{align*}
&\| \nabla F_{S}(\mathbf{w}_{t}) - \nabla F_{S_i}(\mathbf{w}_{t}^{(i)}) \|_{2}^{2}
= \big\| \nabla F_{S_{-i}}(\mathbf{w}_{t})
- \nabla F_{S_{-i}}(\mathbf{w}_{t}^{(i)})
+ \frac{1}{n(n-1)}\big(\sum\limits_{k \neq i}\big[ \nabla f(\mathbf{w}_{t};z_{k},z_{i})  - \nabla f(\mathbf{w}_{t}^{(i)};z_{k},z_{i}') \big] \\
& + \sum\limits_{l \neq i}\big[\nabla f(\mathbf{w}_{t};z_{i},z_{l}) - \nabla f(\mathbf{w}_{t}^{(i)};z_{i}',z_{l}) \big]\big) \big\|_{2}^{2}
\leq 2 \| \nabla F_{S_{-i}}(\mathbf{w}_{t})
- \nabla F_{S_{-i}}(\mathbf{w}_{t}^{(i)}) \|_{2}^{2}\\
& + \frac{2}{n^2(n-1)^2} \big\| \sum\limits_{k \neq i}\big[ \nabla f(\mathbf{w}_{t};z_{k},z_{i}) - \nabla f(\mathbf{w}_{t}^{(i)};z_{k},z_{i}') \big]
+ \sum\limits_{l \neq i}\big[\nabla f(\mathbf{w}_{t};z_{i},z_{l}) - \nabla f(\mathbf{w}_{t}^{(i)};z_{i}',z_{l}) \big] \big\|_{2}^{2}.
\end{align*}
Considering the coercivity of $F_{S_{-i}}$, we apply Lemma \ref{le3} to give
\begin{align*}
&\Big< \mathbf{w}_t - \mathbf{w}_{t}^{(i)} , \nabla F_{S}(\mathbf{w}_{t}) - \nabla F_{S_i}(\mathbf{w}_{t}^{(i)}) \Big>
= \Big< \mathbf{w}_t - \mathbf{w}_{t}^{(i)} , \nabla F_{S_{-i}}(\mathbf{w}_{t}) - \nabla F_{S_{-i}}(\mathbf{w}_{t}^{(i)}) \Big>
+ \frac{1}{n(n-1)}\Big< \mathbf{w}_t - \mathbf{w}_{t}^{(i)} , \notag\\
&  \sum\limits_{k \neq i}\big[ \nabla f(\mathbf{w}_{t};z_{k},z_{i}) - \nabla f(\mathbf{w}_{t}^{(i)};z_{k},z_{i}') \big] + \sum\limits_{l \neq i}\big[\nabla f(\mathbf{w}_{t};z_{i},z_{l}) - \nabla f(\mathbf{w}_{t}^{(i)};z_{i}',z_{l}) \big] \Big>
\geq \frac{1}{L}\| \nabla F_{S_{-i}}(\mathbf{w}_{t}) - \nabla F_{S_{-i}}(\mathbf{w}_{t}^{(i)}) \|_{2}^{2} \notag\\
&- \frac{1}{n(n-1)} \| \mathbf{w}_t - \mathbf{w}_{t}^{(i)} \|_{2}
\| \sum\limits_{k \neq i}\big[ \nabla f(\mathbf{w}_{t};z_{k},z_{i}) - \nabla f(\mathbf{w}_{t}^{(i)};z_{k},z_{i}') \big] + \sum\limits_{l \neq i}\big[\nabla f(\mathbf{w}_{t};z_{i},z_{l}) - \nabla f(\mathbf{w}_{t}^{(i)};z_{i}',z_{l}) \big] \|_{2}.
\end{align*}
Plugging the above two inequalities back into equation {(\ref{eq13})}, we further get
\begin{align*}
&\mathbb{E}_{A}\left[\|\mathbf{w}_{t+1} - \mathbf{w}_{t+1}^{(i)}\|_{2}^{2}\right]
\leq \mathbb{E}_{A}\left[\|\mathbf{w}_{t} - \mathbf{w}_{t}^{(i)}\|_{2}^{2}\right]
+ (\frac{2\eta_{t}^{2}}{d}-\frac{2\eta_{t}}{dL})\mathbb{E}_{A}\left[\| \nabla F_{S_{-i}}(\mathbf{w}_{t}) - \nabla F_{S_{-i}}(\mathbf{w}_{t}^{(i)}) \|_{2}^{2}\right] \\
& + \frac{2\eta_{t}^{2}}{n^2(n-1)^2d}\mathbb{E}_{A}\Big[\| \sum\limits_{k \neq i}\big[ \nabla f(\mathbf{w}_{t};z_{k},z_{i}) - \nabla f(\mathbf{w}_{t}^{(i)};z_{k},z_{i}') \big]
+ \sum\limits_{l \neq i}\big[\nabla f(\mathbf{w}_{t};z_{i},z_{l}) - \nabla f(\mathbf{w}_{t}^{(i)};z_{i}',z_{l}) \big] \|_{2}^{2}\Big] \\
& + \frac{2\eta_t}{n(n-1)d} \mathbb{E}_{A}\Big[\|\mathbf{w}_t - \mathbf{w}_{t}^{(i)} \|_{2}
\| \sum\limits_{k \neq i}\big[ \nabla f(\mathbf{w}_{t};z_{k},z_{i}) - \nabla f(\mathbf{w}_{t}^{(i)};z_{k},z_{i}') \big] + \sum\limits_{l \neq i}\big[\nabla f(\mathbf{w}_{t};z_{i},z_{l}) - \nabla f(\mathbf{w}_{t}^{(i)};z_{i}',z_{l}) \big] \|_{2}\Big].
\end{align*}
With the definition $\mathfrak{C}_{t,i} = \| \sum_{k \neq i}\big[ \nabla f(\mathbf{w}_{t};z_{k},z_{i}) - \nabla f(\mathbf{w}_{t}^{(i)};z_{k},z_{i}') \big] + \sum_{l \neq i}\big[\nabla f(\mathbf{w}_{t};z_{i},z_{l}) - \nabla f(\mathbf{w}_{t}^{(i)};z_{i}',z_{l}) \big] \|_2$ for any $t \in [T]$ and $i \in [n]$, we note $\eta_t \leq 1/L$ and simplify the above inequality as
\begin{align*}
\mathbb{E}_{A}\left[\|\mathbf{w}_{t+1} - \mathbf{w}_{t+1}^{(i)}\|_{2}^{2}\right]
\leq \mathbb{E}_{A}\left[\|\mathbf{w}_{t} - \mathbf{w}_{t}^{(i)}\|_{2}^{2}\right]
+ \frac{2\eta_{t}^{2}}{n^2(n-1)^2d}\mathbb{E}_{A}[\mathfrak{C}_{t,i}^2]
+ \frac{2\eta_t}{n(n-1)d}\mathbb{E}_{A}[\| \mathbf{w}_t - \mathbf{w}_{t}^{(i)} \|_{2}\mathfrak{C}_{t,i}].
\end{align*}
This can be used recursively to give
\begin{align*}
&\mathbb{E}_{A}\left[\|\mathbf{w}_{t+1} - \mathbf{w}_{t+1}^{(i)}\|_{2}^{2}\right]
\leq \sum\limits_{j=1}^{t} \frac{2\eta_j}{n(n-1)d}(\mathbb{E}_{A}[\| \mathbf{w}_j - \mathbf{w}_{j}^{(i)} \|_{2}^{2}])^{\frac{1}{2}} (\mathbb{E}_A[\mathfrak{C}_{j,i}^2])^{\frac{1}{2}}
+ \sum\limits_{j=1}^{t} \frac{2\eta_{j}^{2}}{n^2(n-1)^2d}\mathbb{E}_{A}[\mathfrak{C}_{j,i}^2],
\end{align*}
where we have used the standard inequality $\mathbb{E}[XY] \leq (\mathbb{E}[X^2])^{\frac{1}{2}}(\mathbb{E}[Y^2])^{\frac{1}{2}}$.
Introducing $\widetilde{\Delta}_{t,i} = \max_{k \leq t}(\mathbb{E}_{A}[\| \mathbf{w}_k - \mathbf{w}_{k}^{(i)} \|_{2}^{2}])^{\frac{1}{2}}$ for any $t$ and $i$, we can view the above inequality as the following quadratic inequality
\begin{align*}
\mathbb{E}_{A}\left[\|\mathbf{w}_{t+1} - \mathbf{w}_{t+1}^{(i)}\|_{2}^{2}\right]
\leq \widetilde{\Delta}_{t,i}\sum\limits_{j=1}^{t} \frac{2\eta_j}{n(n-1)d} (\mathbb{E}_A[\mathfrak{C}_{j,i}^2])^{\frac{1}{2}}
+ \sum\limits_{j=1}^{t} \frac{2\eta_{j}^{2}}{n^2(n-1)^2d}\mathbb{E}_{A}[\mathfrak{C}_{j,i}^2].
\end{align*}
Since the right hand of the above inequality is an increasing function w.r.t. $t$, we further have
\begin{align*}
\widetilde{\Delta}_{t,i}^2
\leq \widetilde{\Delta}_{t,i}\sum\limits_{j=1}^{t} \frac{2\eta_j}{n(n-1)d}  (\mathbb{E}_A[\mathfrak{C}_{j,i}^2])^{\frac{1}{2}}
+ \sum\limits_{j=1}^{t} \frac{2\eta_{j}^{2}}{n^2(n-1)^2d}\mathbb{E}_{A}[\mathfrak{C}_{j,i}^2].
\end{align*}
Let $a,b > 0$.
If $x^2 \leq ax+b$, then we know $x \leq \sqrt{b+a^2/4} + a/2$, which can be used to give $x^2 \leq 2(b+a^2/4) + 2 \times a^2/4 = a^2 + 2b$.
This is useful for solving the above quadratic inequality of $\widetilde{\Delta}_{t,i}$ and implies the following inequality
\begin{align*}
\widetilde{\Delta}_{t,i}^2
\leq \left(\sum\limits_{j=1}^{t} \frac{2\eta_j}{n(n-1)d} (\mathbb{E}_A[\mathfrak{C}_{j,i}^2])^{\frac{1}{2}}\right)^2
+ \sum\limits_{j=1}^{t} \frac{4\eta_{j}^{2}}{n^2(n-1)^2d}\mathbb{E}_{A}[\mathfrak{C}_{j,i}^2]
\leq t\sum\limits_{j=1}^{t}\frac{4\eta_{j}^{2}}{n^2(n-1)^2d^2}
\mathbb{E}_{A}[\mathfrak{C}_{j,i}^2]
+ \sum\limits_{j=1}^{t} \frac{4\eta_{j}^{2}}{n^2(n-1)^2d}\mathbb{E}_{A}[\mathfrak{C}_{j,i}^2],
\end{align*}
where we have used the Cauchy inequality $(\sum_{j=1}^{t}a_k)^2 \leq t\sum_{j=1}^{t}a_{j}^{2}$ in the last step.
Then we take an average over $i \in  [n]$ in the above inequality and get the following result
\begin{align*}
\frac{1}{n}\sum\limits_{i=1}^{n}\widetilde{\Delta}_{t,i}^2
\leq \frac{4}{n^3(n-1)^2d}(\frac{t}{d}+1) \sum\limits_{j=1}^{t}\sum\limits_{i=1}^{n} \eta_{j}^{2}\mathbb{E}_{A}[\mathfrak{C}_{j,i}^2].
\end{align*}
Taking expectations w.r.t. $S,S'$ in both sides of this inequality further gives
\begin{align*}
&\frac{1}{n}\sum\limits_{i=1}^{n}\mathbb{E}_{S,S',A}\left[\|\mathbf{w}_{t+1} - \mathbf{w}_{t+1}^{(i)}\|_{2}^{2}\right] \\
&\leq \frac{32L}{n^3(n-1)d}(\frac{t}{d}+1) \sum\limits_{j=1}^{t}\sum\limits_{i=1}^{n} \eta_{j}^{2}\mathbb{E}_{S,S',A}\Big[ \sum_{k \neq i} \big(f(\mathbf{w}_{j};z_{k},z_{i}) + f(\mathbf{w}_{j}^{(i)};z_{k},z_{i}')\big)
+ \sum_{l \neq i}\big(f(\mathbf{w}_{j};z_{i},z_{l}) + f(\mathbf{w}_{j}^{(i)};z_{i}',z_{l})\big) \Big]\\
&= \frac{128L}{n^2d}(\frac{t}{d}+1)\sum\limits_{j=1}^{t}
\eta_{j}^{2}\mathbb{E}_{S,S',A}[F_S(\mathbf{w}_j)],
\end{align*}
where we have used the following inequality (by the self-bounding property of $L$-smooth function)
\begin{align*}
\mathbb{E}_{S,S',A}[\mathfrak{C}_{j,i}^2]
&= \mathbb{E}_{S,S',A}\Big[\| \sum_{k \neq i}\big[ \nabla f(\mathbf{w}_{j};z_{k},z_{i}) - \nabla f(\mathbf{w}_{j}^{(i)};z_{k},z_{i}') \big] + \sum_{l \neq i}\big[\nabla f(\mathbf{w}_{j};z_{i},z_{l}) - \nabla f(\mathbf{w}_{j}^{(i)};z_{i}',z_{l}) \big] \|_{2}^{2}\Big] \\
&\leq 4(n-1)\mathbb{E}_{S,S',A}\Big[\sum_{k \neq i}\| \nabla f(\mathbf{w}_{j};z_{k},z_{i})\|_{2}^{2} + \sum_{k \neq i}\| \nabla f(\mathbf{w}_{j}^{(i)};z_{k},z_{i}')\|_{2}^{2}
+ \sum_{l \neq i}\| \nabla f(\mathbf{w}_{j};z_{i},z_{l})\|_{2}^{2} + \sum_{l \neq i}\| \nabla f(\mathbf{w}_{j}^{(i)};z_{i}',z_{l}) \|_{2}^{2}\Big] \\
&\leq 8(n-1)L\mathbb{E}_{S,S',A}\Big[ \sum_{k \neq i} \big(f(\mathbf{w}_{j};z_{k},z_{i}) + f(\mathbf{w}_{j}^{(i)};z_{k},z_{i}')\big)
+ \sum_{l \neq i}\big(f(\mathbf{w}_{j};z_{i},z_{l}) + f(\mathbf{w}_{j}^{(i)};z_{i}',z_{l})\big) \Big]
\end{align*}
and the following equations (by the symmetry between $z_i$ and $z_i'$)
\begin{align*}
\mathbb{E}_{S,S',A} \left[\sum\limits_{l \neq i} f(\mathbf{w}_{j}^{(i)};z_{i}',z_{l})\right]
= \mathbb{E}_{S,S',A} \left[\sum\limits_{l \neq i} f(\mathbf{w}_{j};z_{i},z_{l}) \right]
\quad {\rm and} \quad
\mathbb{E}_{S,S',A} \left[\sum\limits_{k \neq i} f(\mathbf{w}_{j}^{(i)};z_{k},z_{i}')\right]
= \mathbb{E}_{S,S',A} \left[\sum\limits_{k \neq i} f(\mathbf{w}_{j};z_{k},z_{i})\right].
\end{align*}
This completes the proof for the $\ell_2$ on-average argument stability bound (\ref{eq9}).
Then we turn to the excess risk and plug the stability bound back into part (b) of Lemma \ref{le1} to get
\begin{align*}
&\mathbb{E}_{S,A}\left[F(\mathbf{w}_{t+1}) - F_{S}(\mathbf{w}_{t+1})\right]
\leq\frac{256L(L+\gamma)}{n^2d}(\frac{t}{d}+1)\sum\limits_{j=1}^{t}
\eta_{j}^{2}\mathbb{E}_{S,A}[F_S(\mathbf{w}_j)] +  \frac{L}{\gamma} \mathbb{E}_{S,A} \left[ F_{S}(\mathbf{w}_{t+1})\right].
\end{align*}
For the above inequality, we sum both sides by $\mathbb{E}_{S,A}\left[F_S(\mathbf{w}_{t+1}) - F_{S}(\mathbf{w})\right]$ and use the decomposition $F_S(\mathbf{w}_{j}) = F_S(\mathbf{w}_{j}) - F_S(\mathbf{w}) + F_S(\mathbf{w})$ to derive as below
\begin{align*}
\mathbb{E}_{S,A}\left[F(\mathbf{w}_{t+1}) - F_{S}(\mathbf{w})\right]
&\leq \mathbb{E}_{S,A}\big[F_S(\mathbf{w}_{t+1}) - F_{S}(\mathbf{w})\big] +\frac{L}{\gamma}\mathbb{E}_{S,A}\big[F_{S}(\mathbf{w}_{t+1})- F_S(\mathbf{w}) + F_S(\mathbf{w}) \big] \\
&+\frac{256L(L+\gamma)}{n^2d}(\frac{t}{d}+1)\sum\limits_{j=1}^{t}
\eta_{j}^{2}\mathbb{E}_{S,A}\left[F_{S}(\mathbf{w}_{j}) - F_S(\mathbf{w}) + F_S(\mathbf{w})\right].
\end{align*}
Applying (\ref{eq4}) and (\ref{eq5}) in the above inequality gives
\begin{align*}
&\mathbb{E}_{S,A}\left[F(\mathbf{w}_{t+1}) - F(\mathbf{w}^{*})\right]
\leq\left[\frac{L}{\gamma}+\frac{256L(L+\gamma)}{n^2d}(\frac{t}{d}+1)\sum\limits_{j=1}^{t}
\eta_{j}^{2}\right]\times F(\mathbf{w}^{*})\\
&+\frac{d(1+L\gamma^{-1})}{2\sum_{j=1}^{t+1}\eta_j}
\left(\|\mathbf{w}_{1}-\mathbf{w}^{*}\|_{2}^{2}
+ 2\eta_{1}F_{S}(\mathbf{w}_{1}) \right)
+\frac{128L(L+\gamma)(t+d)}{n^2d}\left(\eta_1\| \mathbf{w}_1 - \mathbf{w}^{*} \|_{2}^{2} + 2  \eta_{1}^{2} F_S(\mathbf{w}_1)\right),
\end{align*}
where we have set $\mathbf{w} = \mathbf{w}^{*}$ to prove the excess risk bound (\ref{eq10}).
When step sizes are fixed, the excess risk bound with $T$ iterations becomes
\begin{align*}
\mathbb{E}_{S,A}\left[F(\mathbf{w}_{T}) - F(\mathbf{w}^{*})\right]
= O\left(\frac{d(1+L\gamma^{-1})}{T}
+\frac{L(L+\gamma)(T+d)} {n^2d}\right)
+ O\left(\frac{L}{\gamma}+\frac{LT(L+\gamma)(T+d)}{n^2d^2}
 \right) \times F(\mathbf{w}^{*}),
\end{align*}
To prove (\ref{eq11}), we choose $\gamma = \frac{LT}{d}$ and have
$$
\mathbb{E}_{S,A}\left[F(\mathbf{w}_{T}) - F(\mathbf{w}^{*})\right]
=O\Big(\frac{d}{T} + \frac{L^2 T^3}{n^2 d^3} \Big).
$$
Then we choose $T \asymp n^{\frac{1}{2}}dL^{-\frac{1}{2}}$ to get
$$
\mathbb{E}_{S,A}\left[F(\mathbf{w}_{T}) - F(\mathbf{w}^{*})\right]
=O\Big(\sqrt{\frac{L}{n}} \Big).
$$
Turning to the other convergence rate, if $F(\mathbf{w}^{*}) = O(d/T)$, then we choose $\gamma = \frac{nd}{T}$ and have
$$
\mathbb{E}_{S,A}\left[F(\mathbf{w}_{T}) - F(\mathbf{w}^{*})\right]
=O\left(\frac{L}{n} + \frac{d}{T} + \frac{L^2T}{n^2d} \right).
$$
Finally, we choose $T \asymp ndL^{-1}$ to derive the convergence rate (\ref{eq12})
$$
\mathbb{E}_{S,A}\left[F(\mathbf{w}_{T}) - F(\mathbf{w}^{*})\right]
= O\left(\frac{L}{n} \right).
$$
The proof is completed.
\end{proof}

\section{D \quad Proof for Strongly Convex Case}

\begin{theorem}\label{th4}
{\rm \textbf{(restated)}}.
Let Assumptions \ref{as2},\ref{as3},\ref{as5} hold.
Let $\{\mathbf{w}_{t}\}$, $\{\mathbf{w}_{t}^{(i)}\}$ be produced by (\ref{eq1}) with $\eta_t \leq \beta/L$ for any $\beta \in (0,1)$ based on $S$ and $S_i$, respectively.
Then the $\ell_2$ on-average  argument stability is shown as below
\begin{align} \label{eq14}
&\frac{1}{n}\sum\limits_{i=1}^{n}\mathbb{E}_{S,S',A}\left[\|\mathbf{w}_{t+1} - \mathbf{w}_{t+1}^{(i)}\|_{2}^{2}\right] \notag\\
&\leq \frac{128L}{n^2d}\sum\limits_{j=1}^{t}
\Big(\frac{t}{d}\prod\limits_{k=j+1}^{t}
\big(1-\frac{2\eta_k(1-\beta)(n-2)\sigma}{nd}\big)^2+
\prod\limits_{k=j+1}^{t}
\big(1-\frac{2\eta_k(1-\beta)(n-2)\sigma}{nd}\big)\Big)
\eta_{j}^{2}\mathbb{E}_{S,A}[F_S(\mathbf{w}_j)].
\end{align}
Let step sizes be fixed as $\eta \leq 1 / \widetilde{L}$. For any $\gamma > 0$, we develop the excess risk bound as
\begin{align}\label{eq15}
\mathbb{E}_{S,A}\left[F(\mathbf{w}_{T+1}) - F_S(\mathbf{w}_S)\right] &= O\left(\big(1+\frac{L}{\gamma}\big)(1-\eta\sigma/d)^T + \frac{L(d+T)(L+\gamma)}{(n-2)^2\sigma^2(1-\beta)^2}\right) \notag\\
&+ O\left(\frac{L}{\gamma} + \frac{L(d+T)(L+\gamma)}{(n-2)^2\sigma^2(1-\beta)^2}\right)\times \mathbb{E} [ F_S(\mathbf{w}_S) ]
\end{align}
In particular, we can choose $T \asymp d\sigma^{-1}\log(n\sigma L^{-1})$ in the above convergence rate and get
\begin{align}\label{eq16}
\mathbb{E}_{S,A}\left[F(\mathbf{w}_{T+1}) - F_S(\mathbf{w}_S)\right]
= O\left(\frac{Ld^{\frac{1}{2}}}{n\sigma^{\frac{3}{2}}}
\sqrt{\log\frac{n\sigma}{L}}\right).
\end{align}
\end{theorem}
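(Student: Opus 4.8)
The plan is to mirror the architecture of the convex case (Theorem~\ref{th2}), replacing the plain coercivity of $F_{S_{-i}}$ by its strongly convex counterpart. First I would note that, by Assumption~\ref{as5}, the normalized empirical risk over the $n-1$ points of $S_{-i}$ is $\sigma$-strongly convex, so that $F_{S_{-i}}$, which equals $\frac{n-2}{n}$ times that normalized risk, is $\frac{(n-2)\sigma}{n}$-strongly convex while still being $L$-smooth (being an average of $L$-smooth losses). Using the same decomposition $F_S = F_{S_{-i}} + \frac{1}{n(n-1)}\sum_{l\neq i}[f(\mathbf{w};z_i,z_l)+f(\mathbf{w};z_l,z_i)]$ and its analogue for $F_{S_i}$, I would expand $\mathbb{E}_{i_t}[\|\mathbf{w}_{t+1}-\mathbf{w}_{t+1}^{(i)}\|_2^2]$ exactly as in the convex proof, producing $\|\mathbf{w}_t-\mathbf{w}_t^{(i)}\|_2^2 + \frac{\eta_t^2}{d}\|\nabla F_S(\mathbf{w}_t)-\nabla F_{S_i}(\mathbf{w}_t^{(i)})\|_2^2 - \frac{2\eta_t}{d}\langle \mathbf{w}_t-\mathbf{w}_t^{(i)}, \nabla F_S(\mathbf{w}_t)-\nabla F_{S_i}(\mathbf{w}_t^{(i)})\rangle$.

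Then applying the strongly convex coercivity bound (\ref{eq8}) of Lemma~\ref{le3} to $g=F_{S_{-i}}$ with parameter $\beta$ contributes, beyond the $\frac{\beta}{L}\|\nabla F_{S_{-i}}(\mathbf{w}_t)-\nabla F_{S_{-i}}(\mathbf{w}_t^{(i)})\|_2^2$ term that is absorbed by $\eta_t\leq\beta/L$ exactly as in the convex proof, the extra negative term $-\frac{2\eta_t(1-\beta)(n-2)\sigma}{nd}\|\mathbf{w}_t-\mathbf{w}_t^{(i)}\|_2^2$. Writing $q_k := 1-\frac{2\eta_k(1-\beta)(n-2)\sigma}{nd}$ and $\mathfrak{C}_{t,i}$ for the same perturbation-gradient norm used in the convex proof, this gives $\mathbb{E}_A[\|\mathbf{w}_{t+1}-\mathbf{w}_{t+1}^{(i)}\|_2^2] \leq q_t\,\mathbb{E}_A[\|\mathbf{w}_t-\mathbf{w}_t^{(i)}\|_2^2] + \frac{2\eta_t^2}{n^2(n-1)^2 d}\mathbb{E}_A[\mathfrak{C}_{t,i}^2] + \frac{2\eta_t}{n(n-1)d}\mathbb{E}_A[\|\mathbf{w}_t-\mathbf{w}_t^{(i)}\|_2\mathfrak{C}_{t,i}]$. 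Unrolling, with $\prod_{k=j+1}^t q_k$ attached to the $j$-th term, running the device $x^2\leq ax+b\Rightarrow x^2\leq a^2+2b$ on $\widetilde{\Delta}_{t,i}:=\max_{k\leq t}(\mathbb{E}_A\|\mathbf{w}_k-\mathbf{w}_k^{(i)}\|_2^2)^{1/2}$, applying the Cauchy inequality $(\sum a_j)^2 \leq t\sum a_j^2$ to the product-weighted terms (this is where the $\frac{t}{d}(\prod q_k)^2$ summand appears, the lone $\prod q_k$ summand coming from the $\mathfrak{C}^2$ term), averaging over $i$, and closing with the self-bounding property (Lemma~\ref{le4}) together with the $z_i\leftrightarrow z_i'$ symmetry to turn $\mathbb{E}[\mathfrak{C}_{j,i}^2]$ into $L$ times a sum of $f$-values, yields (\ref{eq14}).

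For the excess risk I would split $F(\mathbf{w}_{T+1})-F_S(\mathbf{w}_S) = [F(\mathbf{w}_{T+1})-F_S(\mathbf{w}_{T+1})] + [F_S(\mathbf{w}_{T+1})-F_S(\mathbf{w}_S)]$, bound the first bracket by Lemma~\ref{le1}(b) after inserting the stability estimate (\ref{eq14}), and the second bracket by iterating (\ref{eq6}), i.e.\ $\mathbb{E}_A[F_S(\mathbf{w}_{T+1})-F_S(\mathbf{w}_S)]\leq(1-\eta\sigma/d)^T\mathbb{E}_A[F_S(\mathbf{w}_1)-F_S(\mathbf{w}_S)]$. Inside (\ref{eq14}) I would decompose $F_S(\mathbf{w}_j) = [F_S(\mathbf{w}_j)-F_S(\mathbf{w}_S)] + F_S(\mathbf{w}_S)$, bound $\mathbb{E}[F_S(\mathbf{w}_j)-F_S(\mathbf{w}_S)]\leq(1-\eta\sigma/d)^{j-1}\mathbb{E}[F_S(\mathbf{w}_1)-F_S(\mathbf{w}_S)]$, and bound the geometric sums $\sum_j\prod_{k=j+1}^t q_k$ and $\sum_j(\prod_{k=j+1}^t q_k)^2$ by $\frac{1}{1-q}\asymp\frac{nd}{\eta(1-\beta)(n-2)\sigma}$; after using $\eta\leq 1/\widetilde{L}$ and simplifying, this produces the denominators $(n-2)^2\sigma^2(1-\beta)^2$ in (\ref{eq15}). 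Finally, with fixed $\eta\asymp 1/\widetilde{L}$, the leading term $(1+L/\gamma)(1-\eta\sigma/d)^T$ is driven below the level of the remaining terms by taking $T\asymp d\sigma^{-1}\log(n\sigma/L)$, and balancing $L/\gamma$ against $L(d+T)(L+\gamma)/((n-2)^2\sigma^2(1-\beta)^2)$ via the choice $\gamma\asymp n\sigma^{3/2}d^{-1/2}(\log(n\sigma/L))^{-1/2}$ delivers (\ref{eq16}).

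I expect the main obstacle to be the bookkeeping in the unrolled, contracted recursion while keeping the quadratic-inequality argument valid: since every past term now carries its own decaying factor $\prod_{k=j+1}^t q_k$, the quantity $\widetilde{\Delta}_{t,i}$ no longer dominates a clean geometric sum, so one must carefully separate the $\frac{t}{d}(\prod q_k)^2$ and $\prod q_k$ contributions and retain the $(n-2)/n$ factors that feed the $\sigma$- and $\beta$-dependence of both (\ref{eq14}) and (\ref{eq15}); tracking how these geometric sums interact with the separate geometric decay $(1-\eta\sigma/d)^{j-1}$ of the optimization error is the most delicate part.
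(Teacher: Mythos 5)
Your proposal is correct and follows essentially the same route as the paper's proof: the same decomposition via $F_{S_{-i}}$ with strong-convexity parameter $\sigma'=(n-2)\sigma/n$, the strongly convex coercivity bound absorbed using $\eta_t\le\beta/L$, the unrolled contracted recursion with the quadratic-inequality device and Cauchy step, the self-bounding/symmetry closure for the stability bound, and the same assembly of Lemma 1(b) with the linearly convergent optimization error, $\gamma\asymp n\sigma/\sqrt{T}$ and $T\asymp d\sigma^{-1}\log(n\sigma/L)$. The only cosmetic difference is in simplifying the product-weighted sums for the excess risk: you use plain geometric series $\sum_j q^{t-j}\le 1/(1-q)$ for constant step size, whereas the paper uses telescoping identities showing $\sum_j(1-q_j)^2\prod_{k>j}q_k^2\le 1$; since $\eta\le\beta/L$ and $\sigma\le L$, your intermediate bound still implies the stated excess risk rate.
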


\begin{proof}
Referring to the proof under convex case, we have the following equation
\begin{align*}
\mathbb{E}_{i_t}[\|\mathbf{w}_{t+1} - \mathbf{w}_{t+1}^{(i)}\|_{2}^{2}]
= \| \mathbf{w}_t - \mathbf{w}_{t}^{(i)} \|_{2}^{2}
+ \frac{\eta_{t}^{2}}{d}\| \nabla F_{S}(\mathbf{w}_{t}) - \nabla F_{S_i}(\mathbf{w}_{t}^{(i)}) \|_{2}^{2}
- 2 \frac{\eta_{t}}{d}\big< \mathbf{w}_t - \mathbf{w}_{t}^{(i)} , \nabla F_{S}(\mathbf{w}_{t}) - \nabla F_{S_i}(\mathbf{w}_{t}^{(i)}) \big>
\end{align*}
and the following inequality
\begin{align*}
&\| \nabla F_{S}(\mathbf{w}_{t}) - \nabla F_{S_i}(\mathbf{w}_{t}^{(i)}) \|_{2}^{2}
\leq 2 \| \nabla F_{S_{-i}}(\mathbf{w}_{t})
- \nabla F_{S_{-i}}(\mathbf{w}_{t}^{(i)}) \|_{2}^{2}\\
&+ \frac{2}{n^2(n-1)^2} \big\| \sum\limits_{k \neq i}\big[ \nabla f(\mathbf{w}_{t};z_{k},z_{i}) - \nabla f(\mathbf{w}_{t}^{(i)};z_{k},z_{i}') \big]
+ \sum\limits_{l \neq i}\big[\nabla f(\mathbf{w}_{t};z_{i},z_{l}) - \nabla f(\mathbf{w}_{t}^{(i)};z_{i}',z_{l}) \big] \big\|_{2}^{2}.
\end{align*}
Resembling Lemma \ref{le3} for convex empirical risks, we present the coercivity property as the following inequality
\begin{align*}
&\Big< \mathbf{w}_t - \mathbf{w}_{t}^{(i)} , \nabla F_{S}(\mathbf{w}_{t}) - \nabla F_{S_i}(\mathbf{w}_{t}^{(i)}) \Big> \\
&\geq \frac{\beta}{L}\| \nabla F_{S_{-i}}(\mathbf{w}_{t}) - \nabla F_{S_{-i}}(\mathbf{w}_{t}^{(i)}) \|_{2}^{2}
+ (1-\beta)\sigma'\| \mathbf{w}_t - \mathbf{w}_{t}^{(i)} \|_{2}^{2}
- \frac{1}{n}\| \mathbf{w}_t - \mathbf{w}_{t}^{(i)} \|_{2}\| \nabla f(\mathbf{w}_{t};z_{i}) - \nabla f(\mathbf{w}_{t};z_{i}') \|_{2},
\end{align*}
where the strong convexity parameter is $\sigma' = \frac{n-2}{n} \sigma$ since $F_{S_{-i}}(\mathbf{w}) = \frac{1}{n(n-1)}\sum_{j,j'\in [n] / i : j \neq j'}f(\mathbf{w};z_j,z_{j'})$ is $\sigma'$-strongly convex.
The above three formulas jointly imply the inequality as below
\begin{align*}
&\mathbb{E}_{A}\left[\|\mathbf{w}_{t+1} - \mathbf{w}_{t+1}^{(i)}\|_{2}^{2}\right]
\leq \Big(1-\frac{2\eta_t(1-\beta)\sigma'}{d}\Big)
\mathbb{E}_{A}\left[\|\mathbf{w}_{t} - \mathbf{w}_{t}^{(i)}\|_{2}^{2}\right]
+ (\frac{2\eta_{t}^{2}}{d}-\frac{2\eta_{t}\beta}{dL})
\mathbb{E}_{A}\left[\| \nabla F_{S_{-i}}(\mathbf{w}_{t}) - \nabla F_{S_{-i}}(\mathbf{w}_{t}^{(i)}) \|_{2}^{2}\right] \\
& + \frac{2\eta_{t}^{2}}{n^2(n-1)^2d}\mathbb{E}_{A}\left[\| \sum\limits_{k \neq i}\big[ \nabla f(\mathbf{w}_{t};z_{k},z_{i}) - \nabla f(\mathbf{w}_{t}^{(i)};z_{k},z_{i}') \big]
+ \sum\limits_{l \neq i}\big[\nabla f(\mathbf{w}_{t};z_{i},z_{l}) - \nabla f(\mathbf{w}_{t}^{(i)};z_{i}',z_{l}) \big] \|_{2}^{2}\right] \\
& + \frac{2\eta_t}{n(n-1)d}\| \mathbf{w}_t - \mathbf{w}_{t}^{(i)} \|_{2}
\| \sum\limits_{k \neq i}\big[ \nabla f(\mathbf{w}_{t};z_{k},z_{i}) - \nabla f(\mathbf{w}_{t}^{(i)};z_{k},z_{i}') \big] + \sum\limits_{l \neq i}\big[\nabla f(\mathbf{w}_{t};z_{i},z_{l}) - \nabla f(\mathbf{w}_{t}^{(i)};z_{i}',z_{l}) \big] \|_{2},
\end{align*}
from which we further get the following inequality with $\eta_t \leq \beta/L$ and $\mathfrak{C}_{t,i} = \| \sum_{k \neq i}\big[ \nabla f(\mathbf{w}_{t};z_{k},z_{i}) - \nabla f(\mathbf{w}_{t}^{(i)};z_{k},z_{i}') \big] + \sum_{l \neq i}\big[\nabla f(\mathbf{w}_{t};z_{i},z_{l}) - \nabla f(\mathbf{w}_{t}^{(i)};z_{i}',z_{l}) \big] \|_2$
\begin{align*}
\mathbb{E}_{A}\left[\|\mathbf{w}_{t+1} - \mathbf{w}_{t+1}^{(i)}\|_{2}^{2}\right]
\leq \Big(1-\frac{2\eta_t(1-\beta)\sigma'}{d}\Big)
\mathbb{E}_{A}\left[\|\mathbf{w}_{t} - \mathbf{w}_{t}^{(i)}\|_{2}^{2}\right]
+\frac{2\eta_{t}^{2}}{n^2(n-1)^2d}\mathbb{E}_{A}[\mathfrak{C}_{t,i}^2]
+ \frac{2\eta_t}{n(n-1)d}\mathbb{E}_{A}[\| \mathbf{w}_t - \mathbf{w}_{t}^{(i)} \|_{2}\mathfrak{C}_{t,i}].
\end{align*}
This inequality can be used recursively to give
\begin{align*}
\mathbb{E}_{A}\left[\|\mathbf{w}_{t+1} - \mathbf{w}_{t+1}^{(i)}\|_{2}^{2}\right]
&\leq \sum\limits_{j=1}^{t}\frac{2\eta_j}{n(n-1)d} \prod\limits_{k=j+1}^{t} \Big(1-\frac{2\eta_k(1-\beta)\sigma'}{d}\Big) (\mathbb{E}_{A}[\| \mathbf{w}_j - \mathbf{w}_{j}^{(i)} \|_{2}^{2}])^{\frac{1}{2}} (\mathbb{E}_A[\mathfrak{C}_{j,i}^2])^{\frac{1}{2}} \\
&+ \sum\limits_{j=1}^{t} \frac{2\eta_{j}^{2}}{n^2(n-1)^2d}\prod\limits_{k=j+1}^{t} \Big(1-\frac{2\eta_k(1-\beta)\sigma'}{d}\Big)
\mathbb{E}_{A}[\mathfrak{C}_{j,i}^2],
\end{align*}
where we have also handled $\mathbb{E}_{A}[\| \mathbf{w}_t - \mathbf{w}_{t}^{(i)} \|_{2}\mathfrak{C}_{t,i}]$ by the standard inequality $\mathbb{E}[XY] \leq (\mathbb{E}[X^2])^{\frac{1}{2}}(\mathbb{E}[Y^2])^{\frac{1}{2}}$.
Then we further derive the following inequality
\begin{align*}
\widetilde{\Delta}_{t,i}^{2}
&\leq \sum\limits_{j=1}^{t}\frac{2\eta_j}{n(n-1)d} \prod\limits_{k=j+1}^{t} \Big(1-\frac{2\eta_k(1-\beta)\sigma'}{d}\Big) (\mathbb{E}_A[\mathfrak{C}_{j,i}^2])^{\frac{1}{2}} \widetilde{\Delta}_{t,i} \notag\\
&+ \sum\limits_{j=1}^{t} \frac{2\eta_{j}^{2}}{n^2(n-1)^2d}\prod\limits_{k=j+1}^{t} \Big(1-\frac{2\eta_k(1-\beta)\sigma'}{d}\Big)
\mathbb{E}_{A}[\mathfrak{C}_{j,i}^2],
\end{align*}
where $\widetilde{\Delta}_{t,i}$ shares the definition of $\max_{1 \leq j \leq t}(\mathbb{E}_{A}[\| \mathbf{w}_j - \mathbf{w}_{j}^{(i)} \|_{2}^{2}])^{\frac{1}{2}}$ as that in the convex case.
Then we solve this quadratic inequality and give
\begin{align*}
\widetilde{\Delta}_{t,i}^2
&\leq \left(\sum\limits_{j=1}^{t}\frac{2\eta_j}{n(n-1)d} \prod\limits_{k=j+1}^{t} \Big(1-\frac{2\eta_k(1-\beta)\sigma'}{d}\Big) (\mathbb{E}_A[\mathfrak{C}_{j,i}^2])^{\frac{1}{2}} \right)^2
+ \sum\limits_{j=1}^{t} \frac{4\eta_{j}^{2}}{n^2(n-1)^2d}\prod\limits_{k=j+1}^{t} \Big(1-\frac{2\eta_k(1-\beta)\sigma'}{d}\Big)
\mathbb{E}_{A}[\mathfrak{C}_{j,i}^2] \notag\\
&\leq t\sum\limits_{j=1}^{t}\frac{4\eta_{j}^{2}}{n^2(n-1)^2d^2}
\prod\limits_{k=j+1}^{t}
\Big(1-\frac{2\eta_k(1-\beta)\sigma'}{d}\Big)^2
\mathbb{E}_A[\mathfrak{C}_{j,i}^2]
+ \sum\limits_{j=1}^{t} \frac{4\eta_{j}^{2}}{n^2(n-1)^2d}\prod\limits_{k=j+1}^{t} \Big(1-\frac{2\eta_k(1-\beta)\sigma'}{d}\Big)
\mathbb{E}_{A}[\mathfrak{C}_{j,i}^2] \notag\\
&=\sum\limits_{j=1}^{t} \frac{4\eta_{j}^{2}}{n^2(n-1)^2d}
\Big(\frac{t}{d}\prod\limits_{k=j+1}^{t}
\big(1-\frac{2\eta_k(1-\beta)\sigma'}{d}\big)^2+
\prod\limits_{k=j+1}^{t}\big(1-\frac{2\eta_k(1-\beta)\sigma'}{d}\big)\Big)
\mathbb{E}_{A}[\mathfrak{C}_{j,i}^2],
\end{align*}
where we should note that $\prod_{k=t+1}^{t} \Big(1-\frac{2\eta_k(1-\beta)\sigma'}{d}\Big) = \prod_{k=t+1}^{t} \Big(1-\frac{2\eta_k(1-\beta)\sigma'}{d}\Big)^2 = 1$.
Taking an average over $i \in  [n]$ in the above inequality, we further have
\begin{align*}
\frac{1}{n}\sum\limits_{i=1}^{n}\widetilde{\Delta}_{t,i}^2
\leq \sum\limits_{j=1}^{t}\sum\limits_{i=1}^{n} \frac{4\eta_{j}^{2}}{n^3(n-1)^2d}
\Big(\frac{t}{d}\prod\limits_{k=j+1}^{t}
\big(1-\frac{2\eta_k(1-\beta)\sigma'}{d}\big)^2+
\prod\limits_{k=j+1}^{t}\big(1-\frac{2\eta_k(1-\beta)\sigma'}{d}\big)\Big)
\mathbb{E}_{A}[\mathfrak{C}_{j,i}^2].
\end{align*}
Then we take expectations w.r.t. $S,S'$ in both sides of the above inequality to get
\begin{align} \label{eq17}
&\frac{1}{n}\sum\limits_{i=1}^{n}\mathbb{E}_{S,S',A}\left[\|\mathbf{w}_{t+1} - \mathbf{w}_{t+1}^{(i)}\|_{2}^{2}\right] \notag\\
&\leq \frac{128L}{n^2d}\sum\limits_{j=1}^{t}
\Big(\frac{t}{d}\prod\limits_{k=j+1}^{t}
\big(1-\frac{2\eta_k(1-\beta)\sigma'}{d}\big)^2+
\prod\limits_{k=j+1}^{t}
\big(1-\frac{2\eta_k(1-\beta)\sigma'}{d}\big)\Big)
\eta_{j}^{2}\mathbb{E}_{S,S',A}[F_S(\mathbf{w}_j)],
\end{align}
where we have used the following formulas
\begin{align*}
\mathbb{E}_{S,S',A}[\mathfrak{C}_{j,i}^2]
\leq 8(n-1)L\mathbb{E}_{S,S',A}\Big[ \sum_{k \neq i} \big(f(\mathbf{w}_{j};z_{k},z_{i}) + f(\mathbf{w}_{j}^{(i)};z_{k},z_{i}')\big)
+ \sum_{l \neq i}\big(f(\mathbf{w}_{j};z_{i},z_{l}) + f(\mathbf{w}_{j}^{(i)};z_{i}',z_{l})\big) \Big],
\end{align*}
\begin{align*}
\mathbb{E}_{S,S',A} \left[\sum\limits_{l \neq i} f(\mathbf{w}_{j}^{(i)};z_{i}',z_{l})\right]
= \mathbb{E}_{S,S',A} \left[\sum\limits_{l \neq i} f(\mathbf{w}_{j};z_{i},z_{l})\right]
\quad {\rm and} \quad
\mathbb{E}_{S,S',A} \left[\sum\limits_{k \neq i} f(\mathbf{w}_{j}^{(i)};z_{k},z_{i}')\right]
= \mathbb{E}_{S,S',A} \left[\sum\limits_{k \neq i} f(\mathbf{w}_{j};z_{k},z_{i})\right].
\end{align*}
Plugging $\sigma' = \frac{n-2}{n}\sigma$ into the above inequality yields the stability bound (\ref{eq14}).
Then we turn to the corresponding excess risk bound.
Noting the following two inequalities
\begin{align*}
&\sum\limits_{j=1}^{t}\Big(\frac{2\eta_j(1-\beta)\sigma'}{d}\Big)^2
\prod\limits_{k=j+1}^{t} \Big(1-\frac{2\eta_k(1-\beta)\sigma'}{d}\Big)^2
=\sum\limits_{j=1}^{t}\Big[1-\Big(1-\big(\frac{2\eta_j
(1-\beta)\sigma'}{d}\big)^2\Big)\Big]
\prod\limits_{k=j+1}^{t} \Big(1-\frac{2\eta_k(1-\beta)\sigma'}{d}\Big)^2 \\
&= \sum\limits_{j=1}^{t}\prod\limits_{k=j+1}^{t} \Big(1-\frac{2\eta_k(1-\beta)\sigma'}{d}\Big)^2 - \sum\limits_{j=1}^{t}\Big(1+\frac{2\eta_j(1-\beta)\sigma'}{d}\Big)
\Big(1-\frac{2\eta_j(1-\beta)\sigma'}{d}\Big)\prod\limits_{k=j+1}^{t} \Big(1-\frac{2\eta_k(1-\beta)\sigma'}{d}\Big)^2 \\
&\leq \sum\limits_{j=1}^{t}\prod\limits_{k=j+1}^{t} \Big(1-\frac{2\eta_k(1-\beta)\sigma'}{d}\Big)^2 - \sum\limits_{j=1}^{t}\Big(1+\frac{2\eta_j(1-\beta)\sigma'}{d}\Big)
\Big(1-\frac{2\eta_j(1-\beta)\sigma'}{d}\Big)^2\prod\limits_{k=j+1}^{t} \Big(1-\frac{2\eta_k(1-\beta)\sigma'}{d}\Big)^2 \\
&= \sum\limits_{j=1}^{t}\prod\limits_{k=j+1}^{t} \Big(1-\frac{2\eta_k(1-\beta)\sigma'}{d}\Big)^2
- \sum\limits_{j=1}^{t}
\prod\limits_{k=j}^{t} \Big(1-\frac{2\eta_k(1-\beta)\sigma'}{d}\Big)^2
- \sum\limits_{j=1}^{t} \frac{2\eta_j(1-\beta)\sigma'}{d}
\prod\limits_{k=j}^{t} \Big(1-\frac{2\eta_k(1-\beta)\sigma'}{d}\Big)^2\\
&= 1 - \prod\limits_{k=1}^{t} \Big(1-\frac{2\eta_k(1-\beta)\sigma'}{d}\Big)^2
- \sum\limits_{j=1}^{t} \frac{2\eta_j(1-\beta)\sigma'}{d} \prod\limits_{k=1}^{t} \Big(1-\frac{2\eta_k(1-\beta)\sigma'}{d}\Big)
\leq 1
\end{align*}
and
\begin{align*}
&\sum\limits_{j=1}^{t}\Big(\frac{2\eta_j(1-\beta)\sigma'}{d}\Big)^2
\prod\limits_{k=j+1}^{t} \Big(1-\frac{2\eta_k(1-\beta)\sigma'}{d}\Big)
=\sum\limits_{j=1}^{t}\Big[1-\Big(1-\big(\frac{2\eta_j
(1-\beta)\sigma'}{d}\big)^2\Big)\Big]
\prod\limits_{k=j+1}^{t} \Big(1-\frac{2\eta_k(1-\beta)\sigma'}{d}\Big) \\
&= \sum\limits_{j=1}^{t}\prod\limits_{k=j+1}^{t} \Big(1-\frac{2\eta_k(1-\beta)\sigma'}{d}\Big) - \sum\limits_{j=1}^{t}\Big(1+\frac{2\eta_j(1-\beta)\sigma'}{d}\Big)
\Big(1-\frac{2\eta_j(1-\beta)\sigma'}{d}\Big)\prod\limits_{k=j+1}^{t} \Big(1-\frac{2\eta_k(1-\beta)\sigma'}{d}\Big) \\
&= \sum\limits_{j=1}^{t}\prod\limits_{k=j+1}^{t} \Big(1-\frac{2\eta_k(1-\beta)\sigma'}{d}\Big)
- \sum\limits_{j=1}^{t}
\prod\limits_{k=j}^{t} \Big(1-\frac{2\eta_k(1-\beta)\sigma'}{d}\Big)
- \sum\limits_{j=1}^{t} \frac{2\eta_j(1-\beta)\sigma'}{d}
\prod\limits_{k=j}^{t} \Big(1-\frac{2\eta_k(1-\beta)\sigma'}{d}\Big)\\
&= 1 - \prod\limits_{k=1}^{t} \Big(1-\frac{2\eta_k(1-\beta)\sigma'}{d}\Big)
- \sum\limits_{j=1}^{t} \frac{2\eta_j(1-\beta)\sigma'}{d} \prod\limits_{k=1}^{t} \Big(1-\frac{2\eta_k(1-\beta)\sigma'}{d}\Big)
\leq 1,
\end{align*}
we apply them to simplify the inequality (\ref{eq17}) as
\begin{align*}
\frac{1}{n}\sum\limits_{i=1}^{n}\mathbb{E}_{S,S',A}\left[\|\mathbf{w}_{t+1} - \mathbf{w}_{t+1}^{(i)}\|_{2}^{2}\right]
&\leq \frac{128L}{n^2d} \times
\frac{d^2}{4(1-\beta)^2\sigma'^2} \times
( 1 + \frac{t}{d} )
\mathbb{E}_{S,S',A}\max_{1 \leq j \leq t}[F_S(\mathbf{w}_j)] \\
&= \frac{32L(d+t)}{n^2(1-\beta)^2\sigma'^2}
\max_{1 \leq j \leq t}\mathbb{E}_{S,A}[F_S(\mathbf{w}_j)].
\end{align*}
Then we plug the above stability bound into part (b) of Lemma \ref{le1} and have
\begin{align*}
&\mathbb{E}_{S,A}\left[F(\mathbf{w}_{t+1}) - F_{S}(\mathbf{w}_{t+1})\right]
\leq \frac{64L(d+t)(L+\gamma)}{n^2\sigma'^2(1-\beta)^2}
\max_{1 \leq j \leq t}\mathbb{E}_{S,A}[F_S(\mathbf{w}_j)]
+  \frac{L}{\gamma} \mathbb{E}_{S,A} \left[ F_{S}(\mathbf{w}_{t+1})\right].
\end{align*}
For the above inequality, we sum both sides by $\mathbb{E}_{S,A}\left[F_S(\mathbf{w}_{t+1}) - F_{S}(\mathbf{w}_S)\right]$ and use the decomposition $F_S(\mathbf{w}_{j}) = F_S(\mathbf{w}_{j}) - F_S(\mathbf{w}_S) + F_S(\mathbf{w}_S)$ to derive as below
\begin{align*}
\mathbb{E}_{S,A}\left[F(\mathbf{w}_{t+1}) - F_{S}(\mathbf{w}_S)\right]
&\leq \mathbb{E}_{S,A}\big[F_S(\mathbf{w}_{t+1}) - F_{S}(\mathbf{w}_S)\big] +\frac{L}{\gamma}\mathbb{E}_{S,A}\big[F_{S}(\mathbf{w}_{t+1})- F_S(\mathbf{w}_S) + F_S(\mathbf{w}_S) \big] \\
&+ \frac{64L(d+t)(L+\gamma)}{n^2\sigma'^2(1-\beta)^2}\max_{1 \leq j \leq t}\mathbb{E}_{S,A}\left[F_{S}(\mathbf{w}_{j}) - F_S(\mathbf{w}_S) + F_S(\mathbf{w}_S)\right].
\end{align*}
With the optimization error bound (\ref{eq6}) and the above inequality, we give
\begin{align*}
&\mathbb{E}_{S,A}\left[F(\mathbf{w}_{t+1}) - F_S(\mathbf{w}_S)\right]
\leq \big(1+\frac{L}{\gamma}\big)(1-\eta_{t}\sigma/d)^{t}
\mathbb{E} [F_{S}(\mathbf{w}_{1})- F_{S}(\mathbf{w}_{S})]\\
&+ \big( \frac{L}{\gamma} + \frac{64L(d+t)(L+\gamma)}{n^2\sigma'^2(1-\beta)^2} \big)\times \mathbb{E} [F_S(\mathbf{w}_S)]
+ \frac{64L(d+t)(L+\gamma)}{n^2\sigma'^2(1-\beta)^2}
\mathbb{E}[F_{S}(\mathbf{w}_{1})- F_{S}(\mathbf{w}_{S})].
\end{align*}
Setting the number of iterations as $T$ and noting $\sigma' = \frac{n-2}{n}\sigma$ in the above inequality can prove the convergence rate (\ref{eq15}).
Turning to the convergence rate (\ref{eq16}), we let $\gamma \asymp n\sigma/\sqrt{T}$ and have
$$
\mathbb{E}_{S,A}\left[F(\mathbf{w}_{T+1}) - F_S(\mathbf{w}_S)\right]
= O\left(\frac{L\sqrt{T}}{n\sigma}+(1-\frac{\eta\sigma}{d})^T\right),
$$
where we choose $T \asymp d\sigma^{-1}\log(n\sigma L^{-1})$ to give
$$
\mathbb{E}_{S,A}\left[F(\mathbf{w}_{T+1}) - F_S(\mathbf{w}_S)\right]
= O\left(\frac{Ld^{\frac{1}{2}}}{n\sigma^{\frac{3}{2}}}
\sqrt{\log\frac{n\sigma}{L}}\right).
$$
This completes the proof for the strongly convex case.
\end{proof}

\section{E \quad Experimental Results}

In this section, we choose the example of AUC maximization to verify the theoretical results on stability measures.
As shown below, the restatement presents the paradigm of AUC maximization tasks, where the relative loss functions play a key role in the development of experiments.
Besides, since both the loss functions are convex, all the following discussions for experimental results are developed based on the theoretical results in convex case.

{\textbf{AUC Maximization. (restated)}.}
AUC score is applied to measure the performance of classification models for imbalanced data.
With the binary output space $\mathcal{Y} = \left\{+1,-1\right\}$, it shows the probability that the model $h_{\mathbf{w}} : \mathcal{X} \mapsto \mathbb{R}$ scores a positive instance higher than a negative instance.
Therefore, the loss function for usually takes the form of $f(\mathbf{w};z,z') = g(\mathbf{w}^\top (x-x'))\mathbb{I}_{[y=1,y'=-1]}$,
where we choose the logistic loss $\phi(t) = \log(1+exp(-t))$ or the hinge loss $\phi(t) = \max\left\{1-t,0\right\}$.

With the task of AUC maximization, we apply RCD to show the stability results for pairwise learning.
We consider the following datasets from LIBSVM \cite{c:2011} and measure the stability of RCD on these datasets.
We follow the experimental settings of SGD for pairwise learning \cite{l:2021} and compare the results of RCD and SGD.
In each experiment, we randomly choose $80$ percents of each dataset as the training set $S$.
Then we perturb a a signal example of $S$ to construct the neighboring dataset $S'$.
We apply RCD or SGD to $S,S'$ and get two iterate sequences, with which we plot the Euclidean distance $\Delta_t =  \| \mathbf{w}_t - \mathbf{w}_t'\|_2$ for each iteration.
While the learning rates are set as $\eta_t = \eta/\sqrt{T}$ with $\eta \in \left\{ 0.05, 0.25, 1, 4 \right\}$ for RCD, we only compare RCD and SGD under the setting of $\eta = 0.05$.
Letting $n$ be the sample size, we report $\Delta_t$ as a function of $T/n$ (the number of passes).
We repeat the experiments $100$ times, and consider the average and the standard deviation.
Results for the hinge loss and the logistic loss are shown in the following, respectively.

\begin{table}[H]
\captionsetup{labelformat=empty}
\captionsetup{aboveskip=15pt}
\centering
\caption{Table E.1: Description of the datasets.}
\begin{tabular}{|c|c|c|c|c|c|c|c|}
\hline
dataset & inst \, feat &  dataset & inst \, feat &  dataset & inst \, feat &  dataset & inst \, feat \\
\hline
a3a & 3185 122 & gisette & 6000 5000 & madelon & 2000 500 & usps & 7291 256 \\
\hline
\end{tabular}
\end{table}

\begin{figure}[H]
    \captionsetup{labelformat=empty}
    \captionsetup{aboveskip=17pt}
    \centering
    \begin{minipage}{0.24\textwidth}
        \includegraphics[width=\linewidth]{a3arcdhinge_stab}
        \caption{(a) RCD for a3a}
    \end{minipage}\hfill
    \begin{minipage}{0.24\textwidth}
        \includegraphics[width=\linewidth]{gisettercdhinge_stab}
        \caption{(b) RCD for gisette}
    \end{minipage}\hfill
    \begin{minipage}{0.24\textwidth}
        \includegraphics[width=\linewidth]{madelonrcdhinge_stab}
        \caption{(c) RCD for madelon}
    \end{minipage}\hfill
    \begin{minipage}{0.24\textwidth}
        \includegraphics[width=\linewidth]{uspsrcdhinge_stab}
        \caption{(d) RCD for usps}
    \end{minipage}

    \vspace{1em}

    \begin{minipage}{0.24\textwidth}
        \includegraphics[width=\linewidth]{a3acomparehinge_stab}
        \caption{(e) Comparison on a3a}
    \end{minipage}\hfill
    \begin{minipage}{0.24\textwidth}
        \includegraphics[width=\linewidth]{gisettecomparehinge_stab}
        \caption{(f) Comparison on gisette}
    \end{minipage}\hfill
    \begin{minipage}{0.24\textwidth}
        \includegraphics[width=\linewidth]{madeloncomparehinge_stab}
        \caption{(g) Comparison on madelon}
    \end{minipage}\hfill
    \begin{minipage}{0.24\textwidth}
        \includegraphics[width=\linewidth]{uspscomparehinge_stab}
        \caption{(h) Comparison on usps}
    \end{minipage}

    \caption{Figure E.1: Euclidean distance $\Delta_t$ as a function of the number of passes for the hinge loss.}
\end{figure}

\begin{figure}[H]
    \captionsetup{labelformat=empty}
    \captionsetup{aboveskip=17pt}
    \centering
    \begin{minipage}{0.24\textwidth}
        \includegraphics[width=\linewidth]{a3arcdlog_stab}
        \caption{(a) RCD for a3a}
    \end{minipage}\hfill
    \begin{minipage}{0.24\textwidth}
        \includegraphics[width=\linewidth]{gisettercdlog_stab}
        \caption{(b) RCD for gisette}
    \end{minipage}\hfill
    \begin{minipage}{0.24\textwidth}
        \includegraphics[width=\linewidth]{madelonrcdlog_stab}
        \caption{(c) RCD for madelon}
    \end{minipage}\hfill
    \begin{minipage}{0.24\textwidth}
        \includegraphics[width=\linewidth]{uspsrcdlog_stab}
        \caption{(d) RCD for usps}
    \end{minipage}

    \vspace{1em}

    \begin{minipage}{0.24\textwidth}
        \includegraphics[width=\linewidth]{a3acomparelog_stab}
        \caption{(e) Comparison on a3a}
    \end{minipage}\hfill
    \begin{minipage}{0.24\textwidth}
        \includegraphics[width=\linewidth]{gisettecomparelog_stab}
        \caption{(f) Comparison on gisette}
    \end{minipage}\hfill
    \begin{minipage}{0.24\textwidth}
        \includegraphics[width=\linewidth]{madeloncomparelog_stab}
        \caption{(g) Comparison on madelon}
    \end{minipage}\hfill
    \begin{minipage}{0.24\textwidth}
        \includegraphics[width=\linewidth]{uspscomparelog_stab}
        \caption{(h) Comparison on usps}
    \end{minipage}

    \caption{Figure E.2: Euclidean distance $\Delta_t$ as a function of the number of passes for the logistic loss.}
\end{figure}

In Figure E.1 and Figure E.2, (a), (b), (c), (d) show us that $\Delta_t$ is increasing as $t$ or $\eta$ grows.
This is consistent with the theoretical results on stability bounds in convex case.
Turning to (e), (f), (g), (h) in Figure E.1 and Figure E.2, we know that RCD is significantly more stable than SGD for pairwise learning.
While the term $(T/n)^2$ dominates the rates of stability bounds for SGD according to Theorem 3 and Theorem 6 of \citeauthor{l:2021} \shortcite{l:2021}, the on-average argument stability bound for RCD takes the order of $O(T/n^2)$.
The experiments for the comparison between RCD and SGD are also consistent with the theoretical stability bounds.
Furthermore, the Euclidean distance under the logistic loss is significantly smaller than that under the hinge loss, which is consistent with the the discussions of \citeauthor{l:2021} \shortcite{l:2021} for smooth and nonsmooth problems.

\bibliography{aaai25}

\end{document}